\documentclass[11pt,a4paper]{article}

\usepackage[T1]{fontenc}
\usepackage[utf8]{inputenc}
\usepackage{lmodern}
\usepackage[margin=1in]{geometry}
\usepackage{microtype}

\usepackage{amsmath,amssymb,amsthm,mathtools}
\usepackage{bm}
\usepackage{booktabs}
\usepackage{graphicx}
\usepackage{url}
\usepackage[hidelinks]{hyperref}
\usepackage{orcidlink}

\makeatletter
\let\latexincludegraphics\includegraphics
\renewcommand{\includegraphics}[2][]{%
	\IfFileExists{#2}{%
		\latexincludegraphics[#1]{#2}%
	}{%
		\fbox{%
			\begin{minipage}{0.9\linewidth}
				\centering
				\footnotesize Missing figure file: \texttt{\detokenize{#2}}
			\end{minipage}%
		}%
	}%
}
\makeatother

\theoremstyle{definition}
\newtheorem{definition}{Definition}[section]

\theoremstyle{plain}
\newtheorem{proposition}[definition]{Proposition}

\title{Structural interpretability in SVMs with truncated orthogonal polynomial kernels}

\author{%
	V\'{i}ctor Soto-Larrosa\textsuperscript{3,4}\,\orcidlink{0000-0002-7079-3646}
	\and
	Nuria Torrado\textsuperscript{1,2,*}\,\orcidlink{0000-0002-1156-3613}
	\and
	Edmundo J. Huertas\textsuperscript{4}\,\orcidlink{0000-0001-6802-3303}
}

\date{%
	\textsuperscript{1} Dpto. de Matem\'{a}ticas, Facultad de Ciencias, Universidad Aut\'{o}noma de Madrid, C/ Francisco Tom\'{a}s y Valiente, 7, 28049 Madrid, Spain.\\
	\textsuperscript{2} Instituto de Ciencias Matem\'{a}ticas (ICMAT), Campus de Cantoblanco UAM, C/ Nicol\'{a}s Cabrera, 13--15, 28049 Madrid, Spain.\\
	\textsuperscript{3} Dpto. de Biociencias, Facultad de Ciencias Biom\'{e}dicas y de la Salud, Universidad Europea de Madrid, C/ Tajo, s/n, 28670 Villaviciosa de Od\'{o}n, Madrid, Spain.\\
	\textsuperscript{4} Dpto. de F\'{\i}sica y Matem\'{a}ticas, Facultad de Ciencias, Universidad de Alcal\'{a}, Ctra. Madrid--Barcelona, Km.\ 33.600, 28805 Alcal\'{a} de Henares, Madrid, Spain.\\[0.75ex]
	\textsuperscript{3,4}\href{mailto:v.soto@uah.es}{\texttt{v.soto@uah.es}}, \href{mailto:victor.soto@universidadeuropea.es}{\texttt{victor.soto@universidadeuropea.es}}\\
	\textsuperscript{1,2,*}\href{mailto:nuria.torrado@uam.es}{\texttt{nuria.torrado@uam.es}}, \href{mailto:nuria.torrado@icmat.es}{\texttt{nuria.torrado@icmat.es}}\\
	\textsuperscript{4}\href{mailto:edmundo.huertas@uah.es}{\texttt{edmundo.huertas@uah.es}}\\[0.5ex]
	\textsuperscript{*}Corresponding author
}

\begin{document}

	\maketitle

\begin{abstract}
We study post-training interpretability for Support Vector Machines (SVMs) built from truncated orthogonal polynomial kernels. Since the associated reproducing kernel Hilbert space is finite-dimensional and admits an explicit tensor-product orthonormal basis, the fitted decision function can be expanded exactly in intrinsic RKHS coordinates. This leads to Orthogonal Representation Contribution Analysis (ORCA), a diagnostic framework based on normalized Orthogonal Kernel Contribution (OKC) indices. These indices quantify how the squared RKHS norm of the classifier is distributed across interaction orders, total polynomial degrees, marginal coordinate effects, and pairwise contributions. The methodology is fully post-training and requires neither surrogate models nor retraining. We illustrate its diagnostic value on a synthetic double-spiral problem and on a real five-dimensional echocardiogram dataset. The results show that the proposed indices reveal structural aspects of model complexity that are not captured by predictive accuracy alone.
\end{abstract}

\medskip
	
\noindent\textbf{Keywords.} Support Vector Machines; RKHS regularization diagnostics; orthogonal polynomials; Christoffel--Darboux kernel; SVM interpretability; explainable AI (XAI).

\section{Introduction}
\label{sec:introduction}

Support Vector Machines (SVMs) remain one of the most established methods for binary classification because they combine a solid theoretical foundation with the practical flexibility of kernel methods \cite{CortesVapnik1995,Vapnik1998,ScholkopfSmola2002}. Their main strength is well known: by replacing inner products with suitable kernels, they can produce nonlinear decision boundaries while keeping the optimization problem convex. At the same time, once a kernel has been fixed and the model has been trained, the resulting classifier is often difficult to interpret beyond standard predictive metrics. This is especially the case in nonlinear settings, where the decision function is represented implicitly through kernel evaluations rather than through a small collection of directly readable coefficients.

This paper studies a structured setting in which such a post-training analysis becomes possible. We consider SVM classifiers built from truncated orthogonal polynomial kernels. Kernels of this kind are not new in the SVM literature. Variants based on classical orthogonal polynomial systems have already been proposed and tested from the viewpoint of classification performance and kernel design; see, for example, \cite{OzerChenCirpan2011,MoghaddamHamidzadeh2016}. Our purpose here is different. We do not introduce a new kernel family and we do not modify the SVM optimization problem. Instead, we use the finite orthogonal structure induced by a truncated orthogonal polynomial kernel to build a quantitative description of the trained classifier.

The basic idea is simple. When the kernel is constructed from a truncated orthonormal polynomial family, the associated reproducing kernel Hilbert space (RKHS) is finite-dimensional and comes with an explicit orthonormal basis. Consequently, the regularized component of the trained SVM decision function can be expanded exactly in those orthogonal coordinates. Once this expansion is available, one can ask meaningful structural questions about the trained model. Is the classifier driven mainly by marginal effects or by interactions? Is most of its norm concentrated in low-degree modes or is it spread over higher degrees? Which coordinates contribute most to the purely marginal part? Which pairs of coordinates dominate the pairwise interaction component?

To answer these questions, we introduce \emph{Orthogonal Representation Contribution Analysis} (ORCA), a post-training framework built around a family of normalized \emph{Orthogonal Kernel Contribution} (OKC) indices. These indices summarize how the squared RKHS norm of the trained classifier is distributed across tensor-product orthogonal modes. Two structural parameters play a central role: the number of active coordinates in a tensor mode, which we call its interaction order, and its total polynomial degree. The corresponding grouped contributions provide a compact description of the internal organization of the trained classifier in the coordinate system induced by the kernel.

This point of view is particularly natural when the kernel has an explicit orthogonal representation. In contrast with approaches based on perturbations, surrogate models, or local explanation schemes, our framework works directly with the trained SVM and requires no additional optimization step. Once the dual coefficients are known, the orthogonal coordinates of the regularized decision component can be computed exactly, and the OKC indices follow by simple aggregation.

The contribution of the paper is therefore methodological rather than algorithmic. We do not alter the training stage. Instead, we show that truncated orthogonal polynomial kernels provide a finite-dimensional and analytically transparent setting in which the structure of a trained SVM can be described in precise quantitative terms. In this sense, the proposed framework links kernel geometry, orthogonal expansions, and post-training interpretability.

The rest of the paper is organized as follows. Section~\ref{sec:svm-rkhs} introduces the notation and recalls the SVM/RKHS framework. Section~\ref{sec:OP-CD} presents the truncated orthogonal polynomial kernels used throughout the paper, first in one dimension and then in tensor-product form. Section~\ref{sec:exact-expansion} derives an exact orthogonal expansion of the trained SVM decision function and discusses its geometric meaning. Section~\ref{sec:orca} introduces ORCA and defines the family of OKC indices. The experimental section illustrates the diagnostic value of these quantities on synthetic and real datasets.


	\section{Notation and the SVM/RKHS framework}\label{sec:svm-rkhs}

	Let the input space be a subset $\mathcal{X} \subseteq \mathbb{R}^d$. Training data are
	\begin{equation}\label{eq:data}
		\{(\mathbf{x}_i, y_i)\}_{i=1}^m,\qquad \mathbf{x}_i \in \mathcal{X},\qquad y_i \in \{-1,+1\}.
	\end{equation}
	Kernel inputs are denoted $\mathbf{x}$ and $\mathbf{z}$ (always vectors in $\mathcal{X}$). Scalars are not bold (e.g., $y_i$, $\alpha_i$, $b$, $C$). The kernel is a real-valued symmetric function
	\begin{equation}\label{eq:kerneldef}
		K:\mathcal{X}\times\mathcal{X}\to \mathbb{R},\qquad K(\mathbf{x},\mathbf{z})=K(\mathbf{z},\mathbf{x}).
	\end{equation}
	
	A kernel $K$ is positive semidefinite in the usual sense if for every finite set $\{\mathbf{x}_i\}_{i=1}^m\subset\mathcal{X}$ the Gram matrix $[K(\mathbf{x}_i,\mathbf{x}_j)]_{i,j=1}^m$ is positive semidefinite. The RKHS theorem (see, e.g., \cite{ScholkopfSmola2002,BerlinetThomasAgnan2004}) states that each positive semidefinite kernel corresponds to a unique RKHS $\mathcal{H}$ of functions $f:\mathcal{X}\to\mathbb{R}$ such that the reproducing property holds:
	\begin{equation}\label{eq:reproducing}
		f(\mathbf{x})=\langle f, K(\mathbf{x},\cdot)\rangle_{\mathcal{H}}.
	\end{equation}
	Equivalently, there exists a feature map $\Phi:\mathcal{X}\to\mathcal{H}$ with
	\begin{equation}\label{eq:featuremap}
		K(\mathbf{x},\mathbf{z})=\langle \Phi(\mathbf{x}), \Phi(\mathbf{z})\rangle_{\mathcal{H}}.
	\end{equation}
	
	While Mercer expansions motivate spectral viewpoints in general, our analysis relies on an exact, finite orthonormal expansion that is built into the kernel design.

	The (binary) soft-margin SVM constructs a separating function in $\mathcal{H}$ by minimizing a regularized empirical risk. In its standard primal form, one solves \cite{CortesVapnik1995,Vapnik1998}
	\begin{align}
		\min_{\mathbf{w},\,b,\,\boldsymbol{\xi}} \ & \frac12 \|\mathbf{w}\|_{\mathcal{H}}^2 + C\sum_{i=1}^m \xi_i
		\label{eq:svm-primal}\\
		\text{subject to}\ & y_i\big(\langle \mathbf{w}, \Phi(\mathbf{x}_i)\rangle_{\mathcal{H}} + b\big)\ge 1-\xi_i,\ \ i=1,\dots,m,
		\qquad \xi_i\ge 0.
		\nonumber
	\end{align}
	Here $\mathbf{w}\in\mathcal{H}$, $b\in\mathbb{R}$, and $C>0$ is the regularization parameter.
	
	By the representer principle (Kimeldorf--Wahba \cite{KimeldorfWahba1971}, and its generalized form \cite{ScholkopfHerbrichSmola2001}), any minimizer lies in the span of kernel sections, implying
	\begin{equation}\label{eq:representer}
		\mathbf{w}=\sum_{i=1}^m \alpha_i y_i\,\Phi(\mathbf{x}_i).
	\end{equation}
	Consequently, the decision function is
	\begin{equation}\label{eq:decision}
		g(\mathbf{x})=\langle \mathbf{w},\Phi(\mathbf{x})\rangle_{\mathcal{H}} + b
		=\sum_{i=1}^m \alpha_i y_i K(\mathbf{x}_i,\mathbf{x}) + b,
	\end{equation}
	and the classifier is $f(\mathbf{x})=\mathrm{sign}(g(\mathbf{x}))$.
	
	The dual problem is
	\begin{align}
		\max_{\boldsymbol{\alpha}} \ & \sum_{i=1}^m \alpha_i - \frac12 \sum_{i=1}^m\sum_{j=1}^m \alpha_i\alpha_j y_i y_j K(\mathbf{x}_i,\mathbf{x}_j)
		\label{eq:svm-dual}\\
		\text{subject to}\ & 0\le \alpha_i \le C,\ \ i=1,\dots,m,\qquad \sum_{i=1}^m \alpha_i y_i=0.
		\nonumber
	\end{align}
All feature-space dependencies are encapsulated within the kernel matrix.
This connection implies that interpretability should be addressed at the kernel level. In other words, if $K(\mathbf{x},\mathbf{z})$ admits an interpretable expansion, then the decision function $g(\mathbf{x})$ similarly becomes interpretable.


\section{Truncated orthogonal polynomial (OP) kernels}\label{sec:OP-CD}

This section introduces the orthogonal-polynomial kernel framework used throughout the paper. The underlying ingredients are standard and are recalled only to establish the notation and the finite RKHS structure induced by kernel truncation. In particular, we summarize the one-dimensional construction, its tensor-product extension to multivariate inputs, and the resulting finite feature spaces that will be used in the sequel. 

\subsection{One-dimensional orthogonality}\label{subsec:1D-orthogonality}

We begin by recalling the one-dimensional orthogonal-polynomial framework underlying truncated Christoffel--Darboux kernels. The material in this subsection is standard and is included only to make the paper self-contained and to fix the notation used in the sequel; see, for example, Gautschi~\cite{Gautschi2004}, Szeg\H{o}~\cite{Szego1939}, and Simon~\cite{Simon2008}.

In one dimension we write the inputs are scalars $x,z\in I$ where $I\subset\mathbb{R}$ is an interval.
Let $\mu$ be a finite positive Borel measure supported on $I$ such that $\mu$ has infinite support and all polynomial moments exist.
Consider the real Hilbert space $L^2(\mu)$ with inner product
\begin{equation}\label{eq:L2-inner}
	\langle f,g\rangle_{L^2(\mu)}:=\int_I f(t)\,g(t)\,d\mu(t),
	\qquad
	\|f\|_{L^2(\mu)}^2=\langle f,f\rangle_{L^2(\mu)}.
\end{equation}
Let $\mathbb{R}[t]$ denote the space of real polynomials (in the variable $t$).
Under the standing assumptions on $\mu$, there exists a unique sequence of \emph{orthonormal polynomials} $\{p_n\}_{n\ge 0}$ satisfying
\begin{equation}\label{eq:OP-orthonormality}
	\int_I p_n(t)\,p_m(t)\,d\mu(t)=\delta_{nm},
	\qquad n,m\ge 0,
\end{equation}
where $\delta_{nm}$ is the Kronecker delta ($\delta_{nm}=1$ if $n=m$ and $\delta_{nm}=0$ otherwise), and such that $\deg(p_n)=n$ with positive leading coefficient.

The next definition and its projection interpretation are classical in orthogonal polynomial theory and RKHS language (see, e.g. \cite{Szego1939,Simon2008}). 

\begin{definition}[Christoffel--Darboux kernel]\label{def:CD-1D}
	For $n\in\mathbb{N}_0$, the Christoffel--Darboux (CD) kernel associated with $\mu$ and the orthonormal system $\{p_k\}_{k\ge 0}$ is
	\begin{equation}\label{eq:CD-def}
		K_n(x,z)
		:=\sum_{k=0}^{n} p_k(x)\,p_k(z),
		\qquad x,z\in I.
	\end{equation}
\end{definition}

It is best understood as the integral kernel of an orthogonal projection, i.e.,
if $\Pi_n$ denotes the $L^2(\mu)$-orthogonal projection onto the subspace of polynomials of degree at most $n$, then
\begin{equation}\label{eq:projection-kernel}
	(\Pi_n f)(x)=\int_I K_n(x,t)\,f(t)\,d\mu(t),
	\qquad f\in L^2(\mu),
\end{equation}
and $\Pi_n^2=\Pi_n$.

For later use, we record the following standard consequence, which identifies the truncated polynomial space with a finite-dimensional RKHS.

\begin{proposition}[Reproducing property and induced RKHS in 1D]\label{prop:CD-RKHS-1D}
	Let $\mathcal{H}_n^{(1)}$ be the vector space of polynomials on $I$ of degree at most $n$, endowed with the inner product inherited from $L^2(\mu)$:
	\[
	\langle f,g\rangle_{\mathcal{H}_n^{(1)}}:=\int_I f(t)\,g(t)\,d\mu(t).
	\]
	Then $\mathcal{H}_n^{(1)}$ is a RKHS with reproducing kernel $K_n$ given by \eqref{eq:CD-def}.
	Moreover, $\{p_0,\dots,p_n\}$ is an orthonormal basis of $\mathcal{H}_n^{(1)}$, and $\dim(\mathcal{H}_n^{(1)})=n+1$.	
\end{proposition}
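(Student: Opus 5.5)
The plan has three steps: establish the orthonormal basis and the dimension, check the reproducing property directly from the definition of $K_n$, and then invoke the uniqueness in the RKHS theorem of Section~\ref{sec:svm-rkhs}.

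\emph{Step 1: the inner-product space and its basis.} First we check that $\langle\cdot,\cdot\rangle_{\mathcal{H}_n^{(1)}}$ is a genuine inner product on polynomials of degree at most $n$. Bilinearity and symmetry are clear, and finiteness follows from the existence of all moments of $\mu$. Definiteness is the only point that needs an argument. If $\int_I f^2\,d\mu=0$, then $f$ vanishes $\mu$-a.e. Since $\mu$ has infinite support, $f$ vanishes at infinitely many points, and a polynomial with infinitely many zeros is zero, so $f\equiv 0$. Next, since $\deg p_k=k$, the polynomials $p_0,\dots,p_n$ are linearly independent and span the space of polynomials of degree at most $n$; a triangular change of basis from the monomials shows this. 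By \eqref{eq:OP-orthonormality} they are orthonormal, so they form an orthonormal basis and $\dim\mathcal{H}_n^{(1)}=n+1$.

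\emph{Step 2: the reproducing property.} Fix $x\in I$. The function $K_n(x,\cdot)=\sum_{k=0}^n p_k(x)p_k(\cdot)$ lies in $\mathcal{H}_n^{(1)}$. Take any $f\in\mathcal{H}_n^{(1)}$ and expand it in the basis as $f=\sum_{k=0}^n c_k p_k$, where $c_k=\langle f,p_k\rangle$. Orthonormality then gives
\[
\langle f,K_n(x,\cdot)\rangle_{\mathcal{H}_n^{(1)}}=\sum_{k=0}^n c_k\,p_k(x)=f(x),
\]
which is \eqref{eq:reproducing}.

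\emph{Step 3: identification of the RKHS.} Point evaluation is bounded on $\mathcal{H}_n^{(1)}$, trivially so because the space is finite-dimensional. Hence $\mathcal{H}_n^{(1)}$ is a RKHS, and by Step 2 its kernel is $K_n$. By the uniqueness part of the RKHS theorem recalled in Section~\ref{sec:svm-rkhs}, it is \emph{the} RKHS of $K_n$. Equivalently, \eqref{eq:projection-kernel} shows that $K_n$ reproduces every element of the range of $\Pi_n$.

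The only step requiring real care is definiteness of the inner product in Step 1, which rests on the assumption that $\mu$ has infinite support. Everything else is a direct computation.
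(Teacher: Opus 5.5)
Your proof is correct and is the standard verification the paper has in mind. The paper gives no proof of this proposition: it records it as a standard consequence of the projection identity \eqref{eq:projection-kernel} and then writes out the coordinate expansion $f=\sum_k c_kp_k$ with $\|f\|^2=\sum_k c_k^2$, which is exactly what drives your Steps 1--2.

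One small streamlining is available. Once $p_0,\dots,p_n$ are known to span the space, definiteness of the inner product follows automatically from $\|\sum_k c_kp_k\|^2=\sum_k c_k^2$. So the infinite-support hypothesis enters through the existence of the orthonormal system, which the paper takes as given, rather than as a separate step.
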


The kernel $K_n(x,z)$ admits a classical closed form in terms of $p_n$ and $p_{n+1}$.
Writing $p_n(t)=\kappa_n t^n+\cdots$ with $\kappa_n>0$, one has for $x\neq z$
\begin{equation}\label{eq:CD-identity}
	K_n(x,z)=\frac{\kappa_n}{\kappa_{n+1}}\,
	\frac{p_{n+1}(x)p_n(z)-p_n(x)p_{n+1}(z)}{x-z},
	\qquad x\neq z,
\end{equation}
and the diagonal value is obtained by taking the limit $z\to x$,
\begin{equation}\label{eq:CD-diagonal}
	K_n(x,x)=\frac{\kappa_n}{\kappa_{n+1}}\,
	\big(p_{n+1}'(x)p_n(x)-p_n'(x)p_{n+1}(x)\big).
\end{equation}
These identities are standard; see, e.g., Szeg\H{o} \cite{Szego1939}, Gautschi \cite{Gautschi2004}, or Simon \cite{Simon2008}.

For the purposes of this paper, the key point is not the closed-form identity itself, but the fact that the truncation level $n$ induces an explicit finite-dimensional orthonormal coordinate system. We therefore record the corresponding degree-ordered feature map, which is a matter of notation here and will later be used to express the trained SVM separator in exact orthogonal coordinates.
We therefore introduce the corresponding degree-ordered feature map
\begin{equation}
	\Phi_n(x)\;:=\;\big(p_0(x),p_1(x),\ldots,p_n(x)\big)^{\top}\in \mathbb{R}^{n+1}.
\end{equation}
With this notation, the CD kernel is the Euclidean inner product of these polynomial features,
\begin{equation}
	K_n(x,z)\;=\;\sum_{k=0}^{n} p_k(x)\,p_k(z)\;=\;\big\langle \Phi_n(x),\Phi_n(z)\big\rangle_{\mathbb{R}^{n+1}}.
\end{equation}
Equivalently, every function $f\in \mathcal{H}^{(1)}_n=\mathrm{span}\{p_0,\ldots,p_n\}$ admits a unique coordinate expansion
\begin{equation}
	f(x)\;=\;\sum_{k=0}^{n} c_k\,p_k(x),
\end{equation}
and, because $\{p_0,\ldots,p_n\}$ is orthonormal in the inner product inherited from $L^2(\mu)$, its RKHS norm is just the Euclidean norm of the coefficient vector,
\begin{equation}
	\|f\|_{\mathcal{H}^{(1)}_n}^{\,2}\;=\;\sum_{k=0}^{n} c_k^{\,2}.
\end{equation}
In particular, for CD-kernel SVMs the regularization term acts directly on these degree coordinates. This is the basic reason why the learned separator can be opened up into a transparent, degree-by-degree description in the sections that follow.

\subsection{Tensor-product extension to vector inputs}\label{subsec:tensor-product}

We now extend the one-dimensional construction to vector inputs.
To do this, we fix an integer $d\ge 2$ and consider the input space $\mathcal{X}=I^d$.
We write $\mathbf{x}=(x_1,\dots,x_d)$ and $\mathbf{z}=(z_1,\dots,z_d)$ for two inputs in $\mathcal{X}$.

Following common practice in orthogonal-polynomial kernels for SVM classification \cite{OzerChenCirpan2011,MoghaddamHamidzadeh2016}, we use a single truncation level $n$ across all coordinates and build multivariate kernels by multiplying the corresponding one-dimensional kernels.

Recall that $\mathcal{H}_n^{(1)}$ is the one-dimensional RKHS of polynomials on $I$ of degree at most $n$ with inner product inherited from $L^2(\mu)$ and orthonormal basis $\{p_0,\dots,p_n\}$.
We define the multivariate space as the \emph{tensor-product RKHS}
\begin{equation}\label{eq:Hn-tensor-def}
	\mathcal{H}_n^{(d)}:=
	\underbrace{\mathcal{H}_n^{(1)}\otimes\cdots\otimes\mathcal{H}_n^{(1)}}_{\text{$d$ factors}}
	=\big(\mathcal{H}_n^{(1)}\big)^{\otimes d}.
\end{equation}
For readers less familiar with tensor products, \eqref{eq:Hn-tensor-def} can be read concretely as follows:
$\mathcal{H}_n^{(d)}$ is the space of finite linear combinations of separable products
\begin{equation}\label{eq:tensor-simple}
	\mathbf{t}=(t_1,\dots,t_d)\ \longmapsto\ \prod_{i=1}^d f_i(t_i),
	\qquad f_i\in \mathcal{H}_n^{(1)},
\end{equation}
equipped with the inner product determined by the factorization rule on simple tensors,
\begin{equation}\label{eq:tensor-inner}
	\Big\langle \bigotimes_{i=1}^d f_i,\ \bigotimes_{i=1}^d g_i \Big\rangle_{\mathcal{H}_n^{(d)}}
	:=\prod_{i=1}^d \langle f_i,g_i\rangle_{\mathcal{H}_n^{(1)}},
\end{equation}
extended by bilinearity.
These constructions are standard in RKHS theory (see, e.g., \cite{ScholkopfSmola2002,BerlinetThomasAgnan2004}).

To make the multivariate construction explicit, we next introduce the tensor-product truncated orthogonal-polynomial kernel associated with the one-dimensional kernel $K_n$.

\begin{definition}[Tensor-product truncated orthogonal-polynomial kernel]
	Let $K_n$ be the one-dimensional truncated orthogonal-polynomial kernel on $I$, defined in \eqref{eq:CD-def}. The associated $d$-variate tensor-product truncated kernel is defined on $I^d \times I^d$ by
\begin{equation}\label{eq:CD-tensor-def}
	K_n^{(d)}(\mathbf{x},\mathbf{z})
	:=\prod_{i=1}^{d} K_n(x_i,z_i)
	=\prod_{i=1}^{d}\left(\sum_{k=0}^{n} p_k(x_i)\,p_k(z_i)\right),
	\qquad \mathbf{x},\mathbf{z}\in I^d.
\end{equation}
\end{definition}
In particular, $K_n^{(d)}$ is symmetric and positive semidefinite on $I^d \times I^d$.

To express the tensor-product kernel in a form that is directly usable in the subsequent analysis, we next introduce the corresponding multi-index notation. This notation makes the coordinatewise polynomial degrees explicit and provides the natural indexing scheme for the multivariate orthonormal basis associated with the truncated feature space.

We write $\mathbf{k}=(k_1,\dots,k_d)\in\{0,1,\dots,n\}^d$ for a multi-index, where each component $k_i$ corresponds to the degree of the orthonormal polynomial in the $i$-th input coordinate. The scalar $n$ is the fixed truncation level, which limits the maximum polynomial degree in any dimension.

Given that the set $\{p_0,\dots,p_n\}$ forms an orthonormal basis for the one-dimensional space $\mathcal{H}_n^{(1)}$, a canonical orthonormal basis for the $d$-dimensional space $\mathcal{H}_n^{(d)}$ is constructed from simple tensor products. These basis elements are defined as:
\[
p_{k_1}\otimes\cdots\otimes p_{k_d},
\qquad \text{for all } \mathbf{k}=(k_1,\dots,k_d)\in\{0,1,\dots,n\}^d.
\]

Under the natural isomorphism that identifies a simple tensor with a separable product function on the domain $I^d$, this basis can be expressed equivalently as the set of multivariate polynomial functions:
\begin{equation}\label{eq:tensor-basis}
	p_{\mathbf{k}}(\mathbf{t})
	:=\prod_{i=1}^d p_{k_i}(t_i),
	\qquad \mathbf{k}\in\{0,1,\dots,n\}^d,\ \ \mathbf{t}=(t_1,\dots,t_d)\in I^d.
\end{equation}
Here, $p_{k_i}(t_i)$ is the univariate orthonormal polynomial of degree $k_i$ applied to the $i$-th coordinate $t_i$.
Consequently, the space $\mathcal{H}_n^{(d)}$ contains exactly all multivariate polynomial features formed by taking products of these one-dimensional polynomials, with each coordinate contributing a polynomial of degree at most $n$.

\begin{proposition}[Multi-index representation of the tensor-product kernel]
	The tensor-product truncated orthogonal-polynomial kernel defined in \eqref{eq:CD-tensor-def} admits the multi-index expansion
	\begin{equation}\label{eq:CD-multiindex}
		K_n^{(d)}(\mathbf{x},\mathbf{z})
		=
		\sum_{\mathbf{k}\in\{0,1,\dots,n\}^d}
		p_{\mathbf{k}}(\mathbf{x})\,p_{\mathbf{k}}(\mathbf{z}),
		\qquad \mathbf{x},\mathbf{z}\in I^d,
	\end{equation}
	where $p_{\mathbf{k}}$ is given by \eqref{eq:tensor-basis}.
\end{proposition}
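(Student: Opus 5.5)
The plan is to expand the product in the definition \eqref{eq:CD-tensor-def} by distributivity and then regroup the resulting terms by multi-index. We start from
\[
K_n^{(d)}(\mathbf{x},\mathbf{z})=\prod_{i=1}^{d}\Big(\sum_{k_i=0}^{n} p_{k_i}(x_i)\,p_{k_i}(z_i)\Big).
\]
Each factor carries its own independent summation index $k_i$, so that the $d$ sums are visibly separate.

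Multiplying out a product of $d$ finite sums gives a sum over the Cartesian product of the index sets. Hence the expression equals
\[
\sum_{k_1=0}^{n}\cdots\sum_{k_d=0}^{n}\ \prod_{i=1}^d p_{k_i}(x_i)\,p_{k_i}(z_i)
=\sum_{\mathbf{k}\in\{0,\dots,n\}^d}\ \prod_{i=1}^d p_{k_i}(x_i)\,p_{k_i}(z_i).
\]
To justify the distributive step formally, I would argue by induction on $d$. The case $d=1$ is just \eqref{eq:CD-def}. For the inductive step, write $K_n^{(d)}=K_n^{(d-1)}(\mathbf{x}',\mathbf{z}')\,K_n(x_d,z_d)$, where $\mathbf{x}'$ and $\mathbf{z}'$ denote the first $d-1$ coordinates. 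Applying the induction hypothesis to the first factor and multiplying the two finite sums then yields a sum over $\{0,\dots,n\}^{d-1}\times\{0,\dots,n\}=\{0,\dots,n\}^d$.

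The last step is to reorder the finite product of real numbers inside each term:
\[
\prod_{i=1}^d p_{k_i}(x_i)\,p_{k_i}(z_i)=\Big(\prod_{i=1}^d p_{k_i}(x_i)\Big)\Big(\prod_{i=1}^d p_{k_i}(z_i)\Big)=p_{\mathbf{k}}(\mathbf{x})\,p_{\mathbf{k}}(\mathbf{z}),
\]
which uses the definition \eqref{eq:tensor-basis}. This gives \eqref{eq:CD-multiindex}.

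The argument is essentially bookkeeping, and the only point needing care is the index management in the distributive expansion. The induction on $d$ handles it cleanly.

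As a consistency check, the result can be read as saying that $K_n^{(d)}$ is the reproducing kernel of $\mathcal{H}_n^{(d)}$ with orthonormal basis $\{p_{\mathbf{k}}\}$. Its feature map is $\Phi(\mathbf{x})=(p_{\mathbf{k}}(\mathbf{x}))_{\mathbf{k}}\in\mathbb{R}^{(n+1)^d}$, consistent with the factorization rule \eqref{eq:tensor-inner}. This interpretation is not needed for the identity itself.
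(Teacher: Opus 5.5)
Your proposal is correct and follows the paper's proof essentially step for step: expand the product of the $d$ finite sums by distributivity, index the resulting terms by $\mathbf{k}\in\{0,\dots,n\}^d$, and split each product into $p_{\mathbf{k}}(\mathbf{x})\,p_{\mathbf{k}}(\mathbf{z})$ via \eqref{eq:tensor-basis}. Your induction on $d$ only formalizes the distributive expansion, which the paper uses without separate justification.
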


\begin{proof}
	By \eqref{eq:CD-tensor-def},
	\[
	K_n^{(d)}(\mathbf{x},\mathbf{z})
	=
	\prod_{i=1}^d K_n(x_i,z_i)
	=
	\prod_{i=1}^d \left(\sum_{k_i=0}^n p_{k_i}(x_i)\,p_{k_i}(z_i)\right).
	\]
	Expanding the product of sums yields
	\[
	K_n^{(d)}(\mathbf{x},\mathbf{z})
	=
	\sum_{k_1=0}^n \cdots \sum_{k_d=0}^n
	\prod_{i=1}^d p_{k_i}(x_i)\,p_{k_i}(z_i).
	\]
	Introducing the multi-index $\mathbf{k}=(k_1,\dots,k_d)\in\{0,1,\dots,n\}^d$, this can be rewritten as
	\[
	K_n^{(d)}(\mathbf{x},\mathbf{z})
	=
	\sum_{\mathbf{k}\in\{0,1,\dots,n\}^d}
	\left(\prod_{i=1}^d p_{k_i}(x_i)\right)
	\left(\prod_{i=1}^d p_{k_i}(z_i)\right).
	\]
	Using \eqref{eq:tensor-basis},	
	we obtain \eqref{eq:CD-multiindex}.
\end{proof}

Equation~\eqref{eq:CD-multiindex} makes the orthogonal coordinate structure of the multivariate kernel fully explicit and will be the key representation used later to define the proposed diagnostic indices.

We conclude this section by recording the standard RKHS consequence of the tensor-product construction, which identifies $K_n^{(d)}$ as the reproducing kernel of the finite-dimensional space $\mathcal{H}_n^{(d)}$.

\begin{proposition}[Reproducing property and induced RKHS]\label{prop:CD-RKHS-tensor}
	The tensor-product RKHS $\mathcal{H}_n^{(d)}$ in \eqref{eq:Hn-tensor-def} is a reproducing kernel Hilbert space of functions on $I^d$, with reproducing kernel $K_n^{(d)}$ given by \eqref{eq:CD-tensor-def}--\eqref{eq:CD-multiindex}.
	Equivalently, for every $f\in\mathcal{H}_n^{(d)}$ and every $\mathbf{x}\in I^d$,
	\begin{equation}\label{eq:reproducing-tensor}
		f(\mathbf{x})=\big\langle f,\, K_n^{(d)}(\mathbf{x},\cdot)\big\rangle_{\mathcal{H}_n^{(d)}}.
	\end{equation}
	Moreover, the family $\{p_{\mathbf{k}}:\mathbf{k}\in\{0,\dots,n\}^d\}$ in \eqref{eq:tensor-basis} is an orthonormal basis of $\mathcal{H}_n^{(d)}$, and $\dim(\mathcal{H}_n^{(d)})=(n+1)^d$.
\end{proposition}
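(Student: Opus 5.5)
The plan is to work entirely in the concrete coordinate description of $\mathcal{H}_n^{(d)}$ and reduce everything to finite-dimensional linear algebra. By the construction in \eqref{eq:Hn-tensor-def}--\eqref{eq:tensor-inner}, every element of $\mathcal{H}_n^{(d)}$ is a finite linear combination of simple tensors $\bigotimes_i f_i$ with $f_i\in\mathcal{H}_n^{(1)}$. By Proposition~\ref{prop:CD-RKHS-1D}, each $f_i$ expands in $\{p_0,\dots,p_n\}$. Multilinearity of the tensor product then shows that the family $\{p_{k_1}\otimes\cdots\otimes p_{k_d}\}$ spans $\mathcal{H}_n^{(d)}$.

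Orthonormality follows directly from the factorization rule \eqref{eq:tensor-inner} together with the one-dimensional orthonormality \eqref{eq:OP-orthonormality}:
\[
\langle p_{\mathbf{k}},p_{\mathbf{j}}\rangle_{\mathcal{H}_n^{(d)}}=\prod_{i=1}^d \delta_{k_ij_i}=\delta_{\mathbf{k}\mathbf{j}}.
\]
An orthonormal spanning family is linearly independent, so it is an orthonormal basis, and $\dim\mathcal{H}_n^{(d)}=(n+1)^d$.

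Next, I need $\mathcal{H}_n^{(d)}$ to be a genuine space of functions on $I^d$. The natural map sends $\bigotimes_i f_i$ to $\mathbf{t}\mapsto\prod_i f_i(t_i)$, as in \eqref{eq:tensor-simple}, and is extended linearly. This map must be injective, so that the identification used in \eqref{eq:tensor-basis} is legitimate. This is the step I expect to need the most care. I would verify it by showing that the functions $p_{\mathbf{k}}(\mathbf{t})$ are linearly independent as functions on $I^d$. A clean route is to integrate against the product measure $\mu^{\otimes d}$: Fubini gives
\[
\int_{I^d}p_{\mathbf{k}}\,p_{\mathbf{j}}\,d\mu^{\otimes d}=\delta_{\mathbf{k}\mathbf{j}},
\]
so a vanishing linear combination $\sum_{\mathbf{k}} c_{\mathbf{k}}p_{\mathbf{k}}\equiv 0$ forces every $c_{\mathbf{k}}=0$. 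Alternatively, independence follows because the $p_{\mathbf{k}}$ have distinct leading monomials $\prod_i t_i^{k_i}$ and $I$ is infinite. Either way, each $f\in\mathcal{H}_n^{(d)}$ is uniquely $f=\sum_{\mathbf{k}}c_{\mathbf{k}}p_{\mathbf{k}}$, with $\langle f,g\rangle=\sum_{\mathbf{k}} c_{\mathbf{k}}d_{\mathbf{k}}$.

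For the reproducing property, fix $\mathbf{x}\in I^d$. By \eqref{eq:CD-multiindex},
\[
K_n^{(d)}(\mathbf{x},\cdot)=\sum_{\mathbf{k}}p_{\mathbf{k}}(\mathbf{x})\,p_{\mathbf{k}}(\cdot),
\]
which is a finite combination of basis elements and hence lies in $\mathcal{H}_n^{(d)}$. Orthonormality then gives
\[
\langle f,K_n^{(d)}(\mathbf{x},\cdot)\rangle=\sum_{\mathbf{k}}c_{\mathbf{k}}\,p_{\mathbf{k}}(\mathbf{x})=f(\mathbf{x}),
\]
which is \eqref{eq:reproducing-tensor}. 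A finite-dimensional inner product space is automatically complete, so $\mathcal{H}_n^{(d)}$ is a Hilbert space. Evaluation functionals are bounded by Cauchy--Schwarz, with $|f(\mathbf{x})|\le\|f\|\,K_n^{(d)}(\mathbf{x},\mathbf{x})^{1/2}$. Hence $\mathcal{H}_n^{(d)}$ is an RKHS, and its reproducing kernel is $K_n^{(d)}$ by uniqueness of reproducing kernels.
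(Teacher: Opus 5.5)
The paper gives no proof of this proposition. It records it as a ``standard RKHS consequence'' of the tensor-product construction and defers to the standard RKHS texts. Implicitly it invokes the general product-kernel theorem, going back to Aronszajn: the product of two reproducing kernels is the reproducing kernel of the tensor product of the corresponding spaces.

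Your argument is correct. It replaces that appeal with self-contained finite-dimensional linear algebra:
\begin{itemize}
\item spanning by multilinearity;
\item orthonormality from \eqref{eq:tensor-inner};
\item the reproducing identity by pairing $f=\sum_{\mathbf{k}}c_{\mathbf{k}}p_{\mathbf{k}}$ against the expansion \eqref{eq:CD-multiindex}.
\end{itemize}
This last step is the same coefficient manipulation the paper later performs in Proposition~\ref{prop:sec4-2-finite-expansion} and in the norm identity at the start of Section~\ref{sec:orca}.

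The general theorem buys generality: it covers infinite-dimensional factors and arbitrary positive semidefinite kernels. Your route buys rigor exactly where the paper is silent. The paper simply asserts a ``natural isomorphism'' between simple tensors and separable product functions on $I^d$. You actually check that this identification is injective, either via Fubini or via distinct leading monomials (triangularity) on the infinite set $I^d$.

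Your Fubini computation has a further payoff worth keeping. It shows that the abstract tensor inner product coincides with the $L^2(\mu^{\otimes d})$ inner product on $\operatorname{span}\{p_{\mathbf{k}}\}$, since both make $\{p_{\mathbf{k}}\}$ orthonormal. This reconciles the abstract definition \eqref{eq:Hn-tensor-def} with Section~\ref{sec:exact-expansion}, where $\mathcal{H}_n^{(d)}$ is redefined as that span and its basis is described as orthonormal in $L^2(\mu^{\otimes d})$.
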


The truncation level $n$ controls polynomial complexity per coordinate, i.e.,
$\mathcal{H}_n^{(d)}$ contains all coordinatewise products of one-dimensional polynomial features up to degree $n$.
The explicit feature dimension $(n+1)^d$ makes the dependence of model capacity on $n$ and $d$ transparent.
In particular, an SVM using $K_n^{(d)}$ constructs a linear separator in a feature space whose coordinates correspond to these tensor-product polynomial features.


\section{Exact orthogonal expansion of the trained SVM}
\label{sec:exact-expansion}

To derive interpretability from the SVM decision function (Eq. \eqref{eq:decision}), a useful approach is to decompose the kernel $K(\mathbf{x},\mathbf{z})$ into components based on a degree (or multi-degree). Kernels based on orthogonal polynomials inherently provide this structure, as they are constructed from families of orthonormal polynomials. Crucially, these kernels are typically employed with a finite truncation level $n$. This design directly links the complexity of the model to the number of polynomial terms used, creating a natural basis for post-training diagnostic analysis.

We consider the degree-wise interpretation framework to multivariate inputs using the tensor-product truncated orthogonal polynomial (OP) kernel introduced in Section~\ref{sec:OP-CD}. Throughout this section, we fix an integer $d\ge 1$ and work on the product domain $I^d$ equipped with the product measure $\mu^{\otimes d}$. Let $\{p_k\}_{k\ge 0}$ be the orthonormal polynomial system in $L^2(\mu)$ from Section~\ref{sec:OP-CD}. For a multi-index $\boldsymbol{k}=(k_1,\dots,k_d)\in\mathbb{N}_0^d$ and $\boldsymbol{x}=(x_1,\dots,x_d)\in I^d$, let $p_{\boldsymbol{n}}(\boldsymbol{x})$ denote the associated tensor-product orthonormal polynomial in $L^2(\mu^{\otimes d})$, defined as in \eqref{eq:tensor-basis}.

We consider a data set of binary labeled data
\[
\{(\boldsymbol{x}_i,y_i)\}_{i=1}^m, \qquad \boldsymbol{x}_i\in I^d, \qquad y_i\in\{-1,+1\},
\]
and assume we have trained a soft-margin SVM using the tensor-product truncated OP kernel $K_n^{(d)}$ of order $n$ defined in \eqref{eq:CD-tensor-def}. The resulting decision function takes the standard representer form \eqref{eq:decision},
\begin{equation}\label{eq:sec4-2-score}
	g(\boldsymbol{x})=h_n^{(d)}(\boldsymbol{x})+b, \qquad\textup{with }
	h_n^{(d)}(\boldsymbol{x}):=\sum_{i=1}^m \alpha_i y_i\,K_n^{(d)}(\boldsymbol{x}_i,\boldsymbol{x}),
\end{equation}
where $(\alpha_1,\dots,\alpha_m)$ is a solution to the SVM dual problem \eqref{eq:svm-dual} and $b\in\mathbb{R}$ is the bias. The function $h_n^{(d)}$ corresponds to the RKHS component of the decision function, and its norm is penalized in the primal SVM formulation \eqref{eq:svm-primal}.

From a geometric viewpoint, the classification rule is determined by the sign of the decision function
$\widehat{y}(\boldsymbol{x}) = \mathrm{sign}\big(g(\boldsymbol{x})\big)$.
Thus, the geometry of \(g\) over \(I^d\) determines the partition of the input space into decision regions.
In particular, the decision boundary is the zero level set \(\{\boldsymbol{x}\in I^d:\ g(\boldsymbol{x})=0\}\),
which typically forms a nonlinear hypersurface in the original input coordinates.
Although visualizing this multivariate geometry is less straightforward than in the one‑dimensional case,
the key point remains that qualitative geometric features of \(g\) determine how the classifier
partitions the input space (e.g., the number and arrangement of decision regions, and the
complexity of the separating boundary).

This section aims to provide an explicit representation of the internal structure of \(h_n^{(d)}\) in terms of tensor-product polynomial coordinates, and to show how the resulting multi-degree decomposition enables a level-wise analysis of the trained SVM decision function. To the best of our knowledge, even when tensor-product orthogonal polynomial (OP) kernels are used in SVM classification, the exact degree-wise decomposition has not been systematically employed as an intrinsic interpretability tool after training. We emphasize that this analytical capability is a direct consequence of the kernel design. In particular, the chosen truncation order induces a canonical orthonormal coordinate system in which the trained decision function can be decomposed exactly.

We begin by defining the multi-index set
$\mathcal{I}_n^{(d)}:=\{0,1,\dots,n\}^d$,
which contains $\mathrm{card}(\mathcal{I}_n^{(d)}\big)=(n+1)^d$ elements.
Since \(K_n^{(d)}\) is defined as a finite orthonormal expansion in the tensor-product basis \(\{p_{\boldsymbol{n}}:\boldsymbol{n}\in\mathcal{I}_n^{(d)}\}\) (see Section~\ref{sec:OP-CD}), the RKHS component \(h_n^{(d)}\) lies in the finite-dimensional space
\[
\mathcal{H}_n^{(d)}:=\operatorname{span}\{p_{\boldsymbol{n}}:\boldsymbol{n}\in\mathcal{I}_n^{(d)}\}.
\]
Moreover, \(h_n^{(d)}\) admits a unique expansion in this ordered orthonormal basis, with coefficients that can be expressed explicitly in terms of the SVM dual variables and the evaluations of the orthonormal polynomials at the training points.

\begin{proposition}[Finite tensor-product OP expansion of the trained separator]\label{prop:sec4-2-finite-expansion}
	Let $K_n^{(d)}$ be the tensor-product truncated OP kernel, and let $h_n^{(d)}$ be defined by \eqref{eq:sec4-2-score}.
	Then $h_n^{(d)}\in\mathcal{H}_n^{(d)}$ and admits the exact expansion
	\begin{equation}\label{eq:sec4-2-expansion}
		h_n^{(d)}(\boldsymbol{x})
		=
		\sum_{\boldsymbol{k}\in\mathcal{I}_n^{(d)}}
		c_{\boldsymbol{k}}\,p_{\boldsymbol{k}}(\boldsymbol{x}),
	\end{equation}
	with coefficients
	\begin{equation}\label{eq:sec4-2-coeffs}
		c_{\boldsymbol{k}}
		=
		\sum_{i=1}^m \alpha_i y_i\,p_{\boldsymbol{k}}(\boldsymbol{x}_i),
		\qquad
		\boldsymbol{k}\in\mathcal{I}_n^{(d)}.
	\end{equation}
	Equivalently, if we define the tensor-product OP design matrix $\boldsymbol{P}^{(d)}\in\mathbb{R}^{m\times N_d(n)}$ by
	\[
	P^{(d)}_{i,\boldsymbol{k}}:=p_{\boldsymbol{k}}(\boldsymbol{x}_i),
	\qquad
	i=1,\dots,m,\ \textup{ and }
	\boldsymbol{k}\in\mathcal{I}_n^{(d)},
	\]
	and the signed dual vector $\boldsymbol{s}\in\mathbb{R}^m$ by $s_i:=\alpha_i y_i$, then the coefficient vector
	$\boldsymbol{c}\in\mathbb{R}^{N_d(n)}$ with entries $(\boldsymbol{c})_{\boldsymbol{k}}:=c_{\boldsymbol{k}}$
	satisfies
	\begin{equation}\label{eq:sec4-2-matrix-form}
		\boldsymbol{c}=(\boldsymbol{P}^{(d)})^\top \boldsymbol{s}.
	\end{equation}
\end{proposition}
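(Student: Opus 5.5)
The plan is to substitute the multi-index representation of the kernel, \eqref{eq:CD-multiindex}, directly into the definition \eqref{eq:sec4-2-score} of $h_n^{(d)}$ and then exchange the two finite sums. Concretely, for each $\boldsymbol{x}\in I^d$ I will write
\[
h_n^{(d)}(\boldsymbol{x})=\sum_{i=1}^m \alpha_i y_i \sum_{\boldsymbol{k}\in\mathcal{I}_n^{(d)}} p_{\boldsymbol{k}}(\boldsymbol{x}_i)\,p_{\boldsymbol{k}}(\boldsymbol{x})
=\sum_{\boldsymbol{k}\in\mathcal{I}_n^{(d)}}\Big(\sum_{i=1}^m \alpha_i y_i\,p_{\boldsymbol{k}}(\boldsymbol{x}_i)\Big)p_{\boldsymbol{k}}(\boldsymbol{x}).
\]
Both sums are finite, with $m$ and $(n+1)^d$ terms, so the interchange needs no justification beyond linearity. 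The inner bracket is exactly $c_{\boldsymbol{k}}$ as in \eqref{eq:sec4-2-coeffs}, which gives \eqref{eq:sec4-2-expansion}.

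Membership $h_n^{(d)}\in\mathcal{H}_n^{(d)}$ then follows because the right-hand side is a finite linear combination of the basis functions $p_{\boldsymbol{k}}$. Proposition~\ref{prop:CD-RKHS-tensor} identifies these functions as an orthonormal basis of $\mathcal{H}_n^{(d)}$. Alternatively, each kernel section $K_n^{(d)}(\boldsymbol{x}_i,\cdot)$ lies in the RKHS by the reproducing property, and $h_n^{(d)}$ is a linear combination of such sections. Uniqueness of the coefficients is a consequence of orthonormality: taking the $\mathcal{H}_n^{(d)}$ inner product of \eqref{eq:sec4-2-expansion} with $p_{\boldsymbol{j}}$ recovers $c_{\boldsymbol{j}}=\langle h_n^{(d)},p_{\boldsymbol{j}}\rangle$. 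So the expansion is the (unique) coordinate representation in that basis.

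For the matrix form, I fix an ordering of $\mathcal{I}_n^{(d)}$ and note that $N_d(n)=(n+1)^d$. The $\boldsymbol{k}$-th entry of $(\boldsymbol{P}^{(d)})^\top\boldsymbol{s}$ is $\sum_i P^{(d)}_{i,\boldsymbol{k}}s_i=\sum_i p_{\boldsymbol{k}}(\boldsymbol{x}_i)\alpha_i y_i=c_{\boldsymbol{k}}$, which is \eqref{eq:sec4-2-matrix-form}.

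There is no real obstacle. The only point requiring care is a notational one: the paper writes $p_{\boldsymbol{n}}$ for the basis in this section, and I will use $p_{\boldsymbol{k}}$ consistently. I also need to be sure the uniqueness claim uses orthonormality in $\mathcal{H}_n^{(d)}$, which is guaranteed by Proposition~\ref{prop:CD-RKHS-tensor}, rather than relying only on the pointwise identity.
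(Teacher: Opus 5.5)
Your proposal is correct and follows essentially the same route as the paper: you substitute the multi-index expansion \eqref{eq:CD-multiindex} into $h_n^{(d)}$, swap the finite sums to read off $c_{\boldsymbol{k}}$, use the orthonormal basis from Proposition~\ref{prop:CD-RKHS-tensor} for membership and uniqueness, and check the matrix identity entrywise. The only differences are cosmetic: the paper argues membership from the kernel sections before deriving the expansion and cites orthonormality for uniqueness, whereas you derive the expansion first and make uniqueness explicit via $c_{\boldsymbol{j}}=\langle h_n^{(d)},p_{\boldsymbol{j}}\rangle_{\mathcal{H}_n^{(d)}}$.
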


\begin{proof}
	By Proposition~\ref{prop:CD-RKHS-tensor}, the tensor-product truncated kernel $K_n^{(d)}$ is the reproducing kernel of the finite-dimensional space $\mathcal{H}_n^{(d)}$. In particular, for each training point $\boldsymbol{x}_i\in I^d$, the kernel section
	\[
	K_n^{(d)}(\boldsymbol{x}_i,\cdot)\in \mathcal{H}_n^{(d)}.
	\]
	Since $\mathcal{H}_n^{(d)}$ is a vector space, the finite linear combination in \eqref{eq:sec4-2-score},
	\[
	h_n^{(d)}(\boldsymbol{x})
	=
	\sum_{i=1}^m \alpha_i y_i\,K_n^{(d)}(\boldsymbol{x}_i,\boldsymbol{x}),
	\]
	also belongs to $\mathcal{H}_n^{(d)}$. This proves the first claim.
	
	We now use the explicit orthogonal decomposition of the kernel. By the multi-index representation in \eqref{eq:CD-multiindex},
	\[
	K_n^{(d)}(\boldsymbol{x}_i,\boldsymbol{x})
	=
	\sum_{\boldsymbol{k}\in\mathcal{I}_n^{(d)}}
	p_{\boldsymbol{k}}(\boldsymbol{x}_i)\,p_{\boldsymbol{k}}(\boldsymbol{x}),
	\qquad i=1,\dots,m.
	\]
	Substituting this identity into \eqref{eq:sec4-2-score}, we obtain
	\begin{align*}
		h_n^{(d)}(\boldsymbol{x})
		&=
		\sum_{i=1}^m \alpha_i y_i\,K_n^{(d)}(\boldsymbol{x}_i,\boldsymbol{x})\\
		&=
		\sum_{i=1}^m \alpha_i y_i
		\sum_{\boldsymbol{k}\in\mathcal{I}_n^{(d)}}
		p_{\boldsymbol{k}}(\boldsymbol{x}_i)\,p_{\boldsymbol{k}}(\boldsymbol{x}).
	\end{align*}
	Since all sums are finite, we may interchange the order of summation:
	\begin{align*}
		h_n^{(d)}(\boldsymbol{x})
		&=
		\sum_{\boldsymbol{k}\in\mathcal{I}_n^{(d)}}
		\left(
		\sum_{i=1}^m \alpha_i y_i\,p_{\boldsymbol{k}}(\boldsymbol{x}_i)
		\right)
		p_{\boldsymbol{k}}(\boldsymbol{x}).
	\end{align*}
	Therefore,
	\[
	h_n^{(d)}(\boldsymbol{x})
	=
	\sum_{\boldsymbol{k}\in\mathcal{I}_n^{(d)}}
	c_{\boldsymbol{k}}\,p_{\boldsymbol{k}}(\boldsymbol{x}),
	\]
	where
	\[
	c_{\boldsymbol{k}}
	=
	\sum_{i=1}^m \alpha_i y_i\,p_{\boldsymbol{k}}(\boldsymbol{x}_i),
	\qquad
	\boldsymbol{k}\in\mathcal{I}_n^{(d)}.
	\]
	This proves \eqref{eq:sec4-2-expansion} and \eqref{eq:sec4-2-coeffs}.
	
	Moreover, Proposition~\ref{prop:CD-RKHS-tensor} states that the family
	\[
	\{p_{\boldsymbol{k}}:\boldsymbol{k}\in\mathcal{I}_n^{(d)}\}
	\]
	is an orthonormal basis of $\mathcal{H}_n^{(d)}$. Hence the expansion \eqref{eq:sec4-2-expansion} is the unique orthogonal expansion of $h_n^{(d)}$ in that basis.
	
	Finally, define the tensor-product OP design matrix $\boldsymbol{P}^{(d)}\in\mathbb{R}^{m\times N_d(n)}$ by
	\[
	P^{(d)}_{i,\boldsymbol{k}}:=p_{\boldsymbol{k}}(\boldsymbol{x}_i),
	\qquad
	i=1,\dots,m,\qquad \boldsymbol{k}\in\mathcal{I}_n^{(d)},
	\]
	and the signed dual vector $\boldsymbol{s}\in\mathbb{R}^m$ by $s_i=\alpha_i y_i$. Then, for each $\boldsymbol{k}\in\mathcal{I}_n^{(d)}$, the $\boldsymbol{k}$-th entry of $(\boldsymbol{P}^{(d)})^\top\boldsymbol{s}$ is
	\[
	\big((\boldsymbol{P}^{(d)})^\top\boldsymbol{s}\big)_{\boldsymbol{k}}
	=
	\sum_{i=1}^m P^{(d)}_{i,\boldsymbol{k}}\,s_i
	=
	\sum_{i=1}^m p_{\boldsymbol{k}}(\boldsymbol{x}_i)\,\alpha_i y_i
	=
	c_{\boldsymbol{k}},
	\]
	by \eqref{eq:sec4-2-coeffs}. Therefore,
	\[
	\boldsymbol{c}=(\boldsymbol{P}^{(d)})^\top \boldsymbol{s},
	\]
	which is exactly \eqref{eq:sec4-2-matrix-form}.
\end{proof}

Proposition~\ref{prop:sec4-2-finite-expansion} captures a key structural property of tensor-product kernels. In this multivariate setting, 
the trained decision function is expressed as a finite linear combination of multi-degree basis functions indexed by $\boldsymbol{k}=(k_1,\dots,k_d)$.
Each $k_j$ specifies the degree of the univariate orthogonal polynomial acting on the $j$-th input coordinate, and the resulting tensor-product basis is orthonormal in $L^2(\mu^{\otimes d})$ by construction.
To make this concrete, consider a two‑dimensional setting ($d=2$). The multi-index $\boldsymbol{k}=(3,1)$ corresponds to the basis function $p_{(3,1)}(x_1,x_2)=p_3(x_1)p_1(x_2)$, which is a product of a degree‑3 polynomial in the first coordinate and a degree‑1 polynomial in the second. 
Thus, each coefficient $c_{\boldsymbol{k}}$ in the expansion \eqref{eq:sec4-2-expansion} measures the contribution of the specific basis function $p_{\boldsymbol{k}}$ in the trained RKHS component $h_n^{(d)}$.

Consequently, expansion \eqref{eq:sec4-2-expansion} provides an intrinsic coordinate system that separates different geometric effects in the Hilbert space induced by $\mu^{\otimes d}$. Typically, low‑degree basis functions  capture large‑scale variations of the decision function over $I^d$, while higher-degree basis functions contribute progressively finer oscillatory components in the orthogonal polynomial coordinates. This orthonormal separation follows directly from the kernel construction given in Section~\ref{sec:OP-CD}.

Moreover, the tensor‑product indexing makes explicit how constant, linear, quadratic, and interaction terms appear in the multivariate decision function.
In particular, the basis function associated with $\boldsymbol{k}=\boldsymbol{0}$ (degree $0$ in every dimension) contributes a constant offset. 
Moving beyond this term, basis functions whose multi-index has exactly one entry equal to $1$ (with the remaining entries equal to $0$) represent purely affine variation along a single coordinate direction. 
Likewise, multi-indices with one entry equal to $2$ capture univariate quadratic curvature along that coordinate. Finally, mixed indices such as $(1,1,0,\dots,0)$ represent bilinear interactions between two coordinates, expressed in the orthonormal polynomial coordinates.  
Taken together, the tensor-product orthogonal polynomial basis provides a structured decomposition of the learned decision function into interpretable geometric components, with main effects and interactions appear as distinct, mutually orthogonal groups of modes in the $L^2(\mu^{\otimes d})$ geometry.

In addition, the constant basis function corresponding to $\boldsymbol{k}=\boldsymbol{0}$ plays a special role. Since $p_{\boldsymbol{0}}(\boldsymbol{x})$ is constant, both $c_{\boldsymbol{0}}p_{\boldsymbol{0}}(\boldsymbol{x})$ and the explicit bias term $b$ act as constant offsets. Hence, only their sum affects the value of the decision function:
\begin{equation}\label{eq:sec4-2-constant-bias}
	g(\boldsymbol{x})
	= \big(c_{\boldsymbol{0}}+b\big)
	+ \sum_{\boldsymbol{k}\in\mathcal{I}_n^{(d)}\setminus\{\boldsymbol{0}\}}
	c_{\boldsymbol{k}}\,p_{\boldsymbol{k}}(\boldsymbol{x}).
\end{equation}
However, their roles in the optimization differ, $b$ is an unregularized bias term in the primal SVM formulation \eqref{eq:svm-primal}, whereas $c_{\boldsymbol{0}}$ belongs to the RKHS component $h_n^{(d)}$ and is therefore penalized through the regularization term $\|h_n^{(d)}\|_{\mathcal{H}_n^{(d)}}^2$.
Moreover, for the standard soft‑margin SVM with bias, the dual feasibility constraint in \eqref{eq:svm-dual} together with \eqref{eq:sec4-2-coeffs} for $\boldsymbol{k}=\boldsymbol{0}$ implies
\[
c_{\boldsymbol{0}} =p_{\boldsymbol{0}} \sum_{i=1}^m \alpha_i y_i = 0.
\]
Geometrically, any constant contribution produces a vertical translation of the decision function $g(\boldsymbol{x})$ over $I^d$. In the standard setting this offset is solely $b$, while in variants without an explicit bias it is given by $c_{\boldsymbol{0}}+b$. In both cases, the translation moves the zero‑level set of $g$, that is, the decision boundary.



\section{Orthogonal Representation Contribution Analysis (ORCA)}
\label{sec:orca}

In this section we introduce Orthogonal Representation Contribution Analysis (ORCA), a methodology for quantifying how the RKHS norm of the SVM decision function is distributed across the orthogonal components induced by a truncated polynomial kernel representation. By exploiting the orthogonality of the underlying polynomial basis, ORCA provides a principled decomposition of the model into contributions associated with different interaction orders and total polynomial degrees. This decomposition yields a family of normalized Orthogonal Kernel Contribution (OKC) indices, which measure the relative importance of each structural component of the kernel expansion.


Consider a soft-margin SVM trained with the tensor-product truncated orthogonal polynomial (OP) kernel $K_n^{(d)}$ with $d \ge 1$. The resulting decision function takes the form $g(\mathbf{x})=h_n^{(d)}(\mathbf{x})+b$, where $h_n^{(d)}$ is the RKHS regularized component and $b$ is the explicit bias. Proposition~\ref{prop:sec4-2-finite-expansion} provides an exact tensor-product expansion of $h_n^{(d)}$ in the ordered orthonormal basis $\{p_{\boldsymbol{k}}\}_{\boldsymbol{k}\in\mathcal{I}_n^{(d)}}$ induced by the kernel design.
Hence, orthonormality ensures that the RKHS norm penalty decomposes additively across the tensor-product basis functions. Specifically, from \eqref{eq:sec4-2-expansion}, we obtain
\begin{align}\label{eq:rkhs-norm-sum-tensor}\nonumber
	\|h_n^{(d)}\|_{\mathcal{H}_n^{(d)}}^2
	&=
	\langle h_n^{(d)},h_n^{(d)}\rangle_{\mathcal{H}_n^{(d)}} 
	=
	\Big\langle
	\sum_{\boldsymbol{k}\in\mathcal{I}_n^{(d)}} c_{\boldsymbol{k}}\,p_{\boldsymbol{k}},
	\;
	\sum_{\boldsymbol{\ell}\in\mathcal{I}_n^{(d)}} c_{\boldsymbol{\ell}}\,p_{\boldsymbol{\ell}}
	\Big\rangle_{\mathcal{H}_n^{(d)}} \\\nonumber
	&=
	\sum_{\boldsymbol{k}\in\mathcal{I}_n^{(d)}}\sum_{\boldsymbol{\ell}\in\mathcal{I}_n^{(d)}}
	c_{\boldsymbol{k}}c_{\boldsymbol{\ell}}
	\langle p_{\boldsymbol{k}},p_{\boldsymbol{\ell}}\rangle_{\mathcal{H}_n^{(d)}} \\
	&=
	\sum_{\boldsymbol{k}\in\mathcal{I}_n^{(d)}}\sum_{\boldsymbol{\ell}\in\mathcal{I}_n^{(d)}}
	c_{\boldsymbol{k}}c_{\boldsymbol{\ell}}\,
	\delta_{\boldsymbol{k},\boldsymbol{\ell}} 
	=
	\sum_{\boldsymbol{k}\in\mathcal{I}_n^{(d)}} c_{\boldsymbol{k}}^2. 
\end{align}


Equation~\eqref{eq:rkhs-norm-sum-tensor} shows that the squared RKHS norm of the trained component $h_n^{(d)}$ decomposes exactly as a finite sum of nonnegative elementary contributions indexed by the tensor-product multi-index $\boldsymbol{k}\in\mathcal{I}_n^{(d)}$. More precisely, the orthonormality of the basis $\{p_{\boldsymbol{k}}:\boldsymbol{k}\in\mathcal{I}_n^{(d)}\}$ implies that no cross-terms survive in the inner product, so that each coefficient $c_{\boldsymbol{k}}^2$ can be interpreted as the contribution of the single tensor mode $p_{\boldsymbol{k}}$ to the regularized squared RKHS norm. In this sense, \eqref{eq:rkhs-norm-sum-tensor} provides an exact modewise decomposition of the SVM regularization term in the orthogonal coordinates induced by the truncated tensor-product kernel.

While this modewise decomposition is exact, it is often too fine-grained for direct interpretation in moderate or high dimension, since the number of tensor-product modes is
$\mathrm{card}\big(\mathcal I_n^{(d)}\big)=(n+1)^d$.
For instance, if $n=5$ and $d=6$, then $\mathrm{card}(\mathcal I_n^{(d)})=6^6=46{,}656$, so the squared RKHS norm is distributed over tens of thousands of individual multi-index contributions. In such cases, a more informative description is obtained by grouping the elementary terms $c_{\boldsymbol{k}}^2$ according to structural properties of the corresponding tensor mode.

One such property is already available through the active-coordinate set
\[
\operatorname{act}(\boldsymbol{k}) := \{\, i\in\{1,\dots,d\}: k_i>0 \,\},
\qquad
q(\boldsymbol{k}) := \mathrm{card}\big(\operatorname{act}(\boldsymbol{k})\big),
\]
which identifies the coordinates on which the basis function $p_{\boldsymbol{k}}$ actually depends. Thus, $q(\boldsymbol{k})$ measures the interaction order of the mode indexed by $\boldsymbol{k}$. In particular, $q(\boldsymbol{k})=0$ corresponds to the constant mode, $q(\boldsymbol{k})=1$ to purely marginal modes, $q(\boldsymbol{k})=2$ to pairwise interaction modes, and so on.

To complement this interaction-based classification, we also associate with each multi-index its total polynomial degree defined as follows
\[
N(\boldsymbol{k}) := \sum_{i=1}^{d} k_i.
\]
This quantity measures the overall polynomial complexity of the tensor mode $p_{\boldsymbol{k}}$. Since each component of $\boldsymbol{k}$ belongs to $\{0,\dots,n\}$, it follows immediately that
$
N(\boldsymbol{k}) \in \{0,1,\dots,dn\}$.
Therefore, every elementary contribution $c_{\boldsymbol{k}}^2$ carries two intrinsic labels: its interaction order $q(\boldsymbol{k})$ and its total degree $N(\boldsymbol{k})$. These two labels provide a natural way to organize the RKHS regularization term.

Now, for each interaction order $q\in\{0,\dots,d\}$ and each total degree $N\in\{0,\dots,dn\}$, we define
\[
C_{N}^{(q)}
:=
\sum_{\substack{\boldsymbol{k}\in\mathcal{I}_n^{(d)}\\ q(\boldsymbol{k})=q,\; N(\boldsymbol{k})=N}}
c_{\boldsymbol{k}}^2.
\]
By construction, $C_{N}^{(q)}$ is the total contribution to the RKHS regularization term coming from all tensor-product basis functions that involve exactly $q$ active coordinates and have total polynomial degree equal to $N$. 
Thus, from \eqref{eq:rkhs-norm-sum-tensor}, we obtain
\begin{equation}\label{eq:rkhs-block-sum-qN}
	\|h_n^{(d)}\|_{\mathcal{H}_n^{(d)}}^2
	=
	\sum_{\boldsymbol{k}\in\mathcal{I}_n^{(d)}} c_{\boldsymbol{k}}^2
	=
	\sum_{q=0}^{d}\sum_{N=0}^{dn}
	\sum_{\substack{\boldsymbol{k}\in\mathcal{I}_n^{(d)}\\ q(\boldsymbol{k})=q,\;N(\boldsymbol{k})=N}}
	c_{\boldsymbol{k}}^2
	=
	\sum_{q=0}^{d}\sum_{N=0}^{dn} C_{N}^{(q)}.
\end{equation}

Equation~\eqref{eq:rkhs-block-sum-qN} is therefore an exact block decomposition of the squared RKHS norm of the trained SVM component. Its interpretation is direct: the regularization term penalized by the SVM can be read as the sum of contributions coming from constant structure, marginal structure, pairwise interactions, higher-order interactions, and, simultaneously, from low- or high-total-degree tensor modes. This two-way decomposition is the basis for the normalized OKC indices introduced next.

\begin{definition}[Orthogonal Kernel Contribution indices]
	For each interaction order $q\in\{0,\dots,d\}$ and each total degree $N\in\{0,\dots,dn\}$, the Orthogonal Kernel Contribution index associated with the block $(q,N)$ is defined by
	\begin{equation}\label{eq:OKC-qN}
		\mathrm{OKC}_{N}^{(q)}
		:=
		\frac{C_{N}^{(q)}}{\sum_{q=0}^{d}\sum_{N=0}^{dn} C_{N}^{(q)}}.
	\end{equation}
\end{definition}

By construction, $\mathrm{OKC}_{N}^{(q)}$ measures the fraction of the squared RKHS norm of $h_n^{(d)}$ that is attributable to tensor-product modes having interaction order $q$ and total polynomial degree $N$. Since the quantities $C_{N}^{(q)}$ form an exact decomposition of $\|h_n^{(d)}\|_{\mathcal{H}_n^{(d)}}^2$, the corresponding OKC indices are nonnegative and satisfy
\begin{equation}\label{eq:OKC-qN-sum1}
	\sum_{q=0}^{d}\sum_{N=0}^{dn}\mathrm{OKC}_{N}^{(q)}=1.
\end{equation}
Therefore, the family $\{\mathrm{OKC}_{N}^{(q)}\}$ defines a normalized two-way profile of the RKHS regularization term, jointly resolved by interaction order and total degree.

Two aggregated summaries are of particular interest. First, summing over all total degrees yields the contribution of a fixed interaction order:
\begin{equation}\label{eq:OKC-q}
	\mathrm{OKC}^{(q)}
	:=
	\sum_{N=0}^{dn}\mathrm{OKC}_{N}^{(q)},
	\qquad q=0,\dots,d.
\end{equation}
Thus, $\mathrm{OKC}^{(q)}$ quantifies the total fraction of the RKHS norm associated with tensor modes involving exactly $q$ active variables. Specifically, $\mathrm{OKC}^{(0)}$ corresponds to the constant component, $\mathrm{OKC}^{(1)}$ to purely marginal effects, $\mathrm{OKC}^{(2)}$ to pairwise interactions, $\mathrm{OKC}^{(3)}$ to three-way interactions, and so on up to $\mathrm{OKC}^{(d)}$, which collects the contribution of modes involving all $d$ input variables (features) simultaneously.
In particular, the normalized interaction-order profile satisfies
\[
\mathrm{OKC}^{(0)}+\mathrm{OKC}^{(1)}+\sum_{q=2}^{d}\mathrm{OKC}^{(q)}=1,
\]
which separates the total regularization contribution into constant, purely marginal, and higher-order interaction components.

From an interpretive viewpoint, a large value of $\mathrm{OKC}^{(1)}$ indicates that the regularized component of the decision function is dominated by purely marginal effects, that is, by structure aligned with individual input coordinates in the orthogonal polynomial representation. Conversely, substantial mass on $\mathrm{OKC}^{(q)}$ for $q\ge 2$ indicates that the classifier relies more heavily on interactions of order $q$. In the tensor-product setting, these interactions are represented explicitly by basis functions involving products of univariate orthonormal polynomials with positive degree in exactly $q$ coordinates. Accordingly, the interaction-order profile $\big(\mathrm{OKC}^{(0)},\mathrm{OKC}^{(1)},\dots,\mathrm{OKC}^{(d)}\big)$ provides a concise post-training summary of how the RKHS regularization term is distributed across levels of interaction complexity.

Second, summing over all interaction orders yields the contribution of a fixed total degree:
\begin{equation}\label{eq:OKC-N}
	\mathrm{OKC}_{N}
	:=
	\sum_{q=0}^{d}\mathrm{OKC}_{N}^{(q)},
	\qquad N=0,\dots,dn.
\end{equation}
Hence, $\mathrm{OKC}_{N}$ quantifies the total fraction of the RKHS norm carried by tensor modes of total polynomial degree $N$, regardless of their interaction order. Specifically, $\mathrm{OKC}_{0}$ corresponds to the constant contribution, $\mathrm{OKC}_{1}$ collects all linear contributions, $\mathrm{OKC}_{2}$ all quadratic contributions, $\mathrm{OKC}_{3}$ all cubic contributions, and so on up to $\mathrm{OKC}_{dn}$, which captures the highest-degree modes allowed by the truncation level $n$ in dimension $d$.

To obtain a feature-specific resolution of the first-order interaction contribution($q(\boldsymbol{k})=1$), we now isolate the purely marginal effect associated with each individual feature.
For every $i\in\{1,\dots,d\}$, we introduce the marginal index set
\[
M_i
:=
\big\{
\boldsymbol{k}\in\mathcal{I}_n^{(d)}:\operatorname{act}(\boldsymbol{k})=\{i\}
\big\}
=
\big\{
\boldsymbol{k}\in\mathcal{I}_n^{(d)}:k_i\in\{1,\dots,n\},\;k_j=0\ \text{for}\ j\neq i
\big\}.
\]
This set collects exactly those tensor-product modes that depend on the $i$-th coordinate alone, without coupling to any other variable.

The marginal Orthogonal Kernel Contribution index for coordinate $i$ is then defined by
\begin{equation}\label{eq:OKC-marginal-i}
	\mathrm{OKC}_{i}
	:=
	\sum_{\boldsymbol{k}\in M_i}\frac{c_{\boldsymbol{k}}^2}{\|h_n^{(d)}\|_{\mathcal{H}_n^{(d)}}^2},
	\qquad i=1,\dots,d.
\end{equation}

By construction, $\mathrm{OKC}_{i}$ measures the fraction of the squared RKHS norm assigned to basis functions that vary exclusively along the $i$-th coordinate. Hence, $\mathrm{OKC}_{i}$ provides a coordinate-specific marginal profile in the orthogonal kernel representation and enables a direct comparison of the relative marginal importance of the input variables (features).
Moreover, summing over all coordinates recovers the full order-$1$ contribution:
\begin{equation}\label{eq:OKC-marginal-sum}
	\sum_{i=1}^{d}\mathrm{OKC}_{i}
	=
	\mathrm{OKC}^{(1)}.
\end{equation}

To further refine the second-order interaction contribution ($q(\boldsymbol{k})=2$), we now isolate the pure pairwise effect associated with each pair of coordinates.
For every pair $1\le i<j\le d$, define
\[
\mathcal{J}_{ij}
:=
\big\{
\boldsymbol{k}\in\mathcal{I}_n^{(d)}:\operatorname{act}(\boldsymbol{k})=\{i,j\}
\big\}.
\]
This set collects exactly those tensor-product modes that depend on the coordinates $i$ and $j$ only, with no contribution from any other variable.

The pairwise Orthogonal Kernel Contribution index for the pair $(i,j)$ is then defined by
\begin{equation}\label{eq:OKC-pair-ij}
	\mathrm{OKC}_{ij}
	:=
	\frac{1}{\|h_n^{(d)}\|_{\mathcal{H}_n^{(d)}}^2}
	\sum_{\boldsymbol{k}\in \mathcal{J}_{ij}} c_{\boldsymbol{k}}^2,
	\qquad 1\le i<j\le d.
\end{equation}
By construction, $\mathrm{OKC}_{ij}$ measures the fraction of the squared RKHS norm assigned to basis functions that involve exactly the input variables $i$ and $j$. Hence, $\mathrm{OKC}_{ij}$ provides a pair-specific interaction profile in the orthogonal kernel representation and can be used to compare the relative importance of different variable pairs. Moreover, summing over all pairs recovers the full second-order interaction contribution:
\begin{equation}\label{eq:OKC-pair-sum}
	\sum_{1\le i<j\le d}\mathrm{OKC}_{ij}
	=
	\mathrm{OKC}^{(2)}.
\end{equation}

In an entirely analogous way, one may define interaction indices for triples, quadruples, and, more generally, for any subset of variables of cardinality $q=3,4,\dots,d$ by summing the normalized contributions over the corresponding index sets with prescribed active-coordinate set.


Taken together, the OKC indices provide a compact description of the internal structure of the trained classifier. 

A practical advantage of ORCA is that all these quantities are computed directly from the coefficients given in \eqref{eq:sec4-2-coeffs}. No surrogate fitting, re-training, or additional optimization is required. Once the SVM has been trained, the OKC indices are obtained by grouping the exact orthogonal coefficients according to the structural labels introduced above.
In Section~\ref{sec:experiments}, we report the interaction-order indices $\mathrm{OKC}^{(q)}$, the total-degree indices $\mathrm{OKC}_{N}$, and the coordinate-specific marginal indices $\mathrm{OKC}_{i}$; when relevant, we also examine pairwise indices $\mathrm{OKC}_{ij}$.





\section{Experiments}
\label{sec:experiments}

This section illustrates the diagnostic potential of ORCA through a
series of experiments of increasing complexity. We begin with a
synthetic two-dimensional benchmark that allows the geometry of the
decision boundary to be visualised directly and related to the OKC
profiles in a transparent way. Subsequent subsections consider
datasets of higher dimension and greater structural complexity, where
visual inspection of the boundary is no longer feasible and the OKC
indices become the primary tool for understanding the internal
organisation of the trained classifier. In all experiments we use
Jacobi tensor-product kernels as defined in Section~3, and the
regularisation parameter $C$ is reported alongside each configuration.

\subsection{Artificial two-dimensional data: double spiral}
\label{subsec:spiral}

We begin with a synthetic double-spiral dataset, which provides a
challenging nonlinear classification task while keeping the input
dimension fixed at $d = 2$. In this low-dimensional setting every
multi-index $\mathbf{k} = (k_1, k_2)$ and every total degree
$N(\mathbf{k}) = k_1 + k_2$ can be read off and interpreted without
ambiguity, making it an ideal starting point for validating the
qualitative predictions of the ORCA framework before moving to
higher-dimensional settings.

\paragraph{Dataset.}
The dataset consists of $m = 300$ points in $\mathbb{R}^2$ arranged
along two interleaved spirals, with $150$ points per class labelled
$y_i \in \{-1, +1\}$. The spirals complete approximately one and a
half full turns, producing a classification problem whose decision
boundary is genuinely nonlinear and requires high-degree polynomial
structure to be captured accurately. This benchmark is a standard
stress test for kernel classifiers, since the correct separator cannot
be approximated well by low-degree polynomials and any method must
balance approximation power against regularisation.

\paragraph{Preprocessing.}
Each input coordinate is rescaled linearly to the interval
$[-1, 1]$, so that the rescaled inputs lie in the product domain
$\mathcal{I}^2 = [-1,1]^2$. Specifically, if $\widetilde{x}_j^{(i)}$
denotes the $j$-th coordinate of the $i$-th raw training point, the
rescaled value is
\[
    x_j^{(i)}
    \;=\;
    2\,\frac{\widetilde{x}_j^{(i)} - \min_i \widetilde{x}_j^{(i)}}
            {\max_i \widetilde{x}_j^{(i)} - \min_i \widetilde{x}_j^{(i)}}
    - 1,
    \qquad j = 1, 2.
\]
This step ensures that the inputs lie in the natural domain of
Jacobi polynomials and that the orthonormality conditions defining the
kernel are satisfied with respect to the measure $\mu^{\otimes 2}$ on
$[-1,1]^2$.

\paragraph{Jacobi tensor-product kernels.}
Throughout this subsection we use Jacobi polynomials as the
one-dimensional orthonormal system. Recall that for parameters
$\alpha, \beta > -1$ the Jacobi measure on $[-1,1]$ has density
\[
    w^{(\alpha,\beta)}(x) = (1-x)^\alpha (1+x)^\beta,
\]
and the associated orthonormal polynomials
$\{p_k^{(\alpha,\beta)}\}_{k \geq 0}$ satisfy
\[
    \int_{-1}^{1}
    p_k^{(\alpha,\beta)}(x)\,p_j^{(\alpha,\beta)}(x)\,
    w^{(\alpha,\beta)}(x)\,dx
    = \delta_{kj}, \qquad k, j \geq 0.
\]
They are obtained by normalising the classical Jacobi polynomials
$P_k^{(\alpha,\beta)}$, whose squared $L^2(w^{(\alpha,\beta)})$-norm is
\[
    h_k^{(\alpha,\beta)}
    =
    \frac{2^{\alpha+\beta+1}\,\Gamma(k+\alpha+1)\,\Gamma(k+\beta+1)}
         {(2k+\alpha+\beta+1)\,\Gamma(k+1)\,\Gamma(k+\alpha+\beta+1)},
\]
so that $p_k^{(\alpha,\beta)} = P_k^{(\alpha,\beta)} /
\sqrt{h_k^{(\alpha,\beta)}}$. The special case $\alpha = \beta = 0$
recovers the Legendre polynomials. Following the construction of
Section~3, the one-dimensional truncated Christoffel--Darboux kernel
of order $n$ is
\[
    K_n^{(\alpha,\beta)}(x,z)
    = \sum_{k=0}^{n} p_k^{(\alpha,\beta)}(x)\,p_k^{(\alpha,\beta)}(z),
    \qquad x, z \in [-1,1],
\]
and the two-dimensional tensor-product kernel used here is
\[
    K_n^{(2),(\alpha,\beta)}(\mathbf{x},\mathbf{z})
    = K_n^{(\alpha,\beta)}(x_1,z_1)\cdot K_n^{(\alpha,\beta)}(x_2,z_2)
    = \sum_{\mathbf{k} \in \{0,\ldots,n\}^2}
      p_\mathbf{k}^{(\alpha,\beta)}(\mathbf{x})\,
      p_\mathbf{k}^{(\alpha,\beta)}(\mathbf{z}),
\]
where $p_\mathbf{k}^{(\alpha,\beta)}(\mathbf{x}) =
p_{k_1}^{(\alpha,\beta)}(x_1)\,p_{k_2}^{(\alpha,\beta)}(x_2)$ for
$\mathbf{k} = (k_1,k_2)$. The induced RKHS $\mathcal{H}_n^{(2)}$ has
dimension $(n+1)^2$ and orthonormal basis
$\{p_\mathbf{k}^{(\alpha,\beta)} : \mathbf{k} \in \{0,\ldots,n\}^2\}$,
so that the feature dimension grows quadratically with the truncation
level $n$. The regularisation parameter is fixed at $C = 1$ throughout
this subsection; the effects of $n$ and $(\alpha,\beta)$ are studied
separately.

\paragraph{Effect of the truncation level $n$.}
Figure~\ref{fig:boundary_varying_n} shows the decision boundary
$\{g(\mathbf{x}) = 0\}$ of the trained SVM for $n \in \{1, 2, 3, 5,
8, 12, 14, 16\}$ with fixed Legendre parameters $\alpha = \beta = 0$
and $C = 1$.

\begin{figure}[htbp]
    \centering
\includegraphics[width=\linewidth]{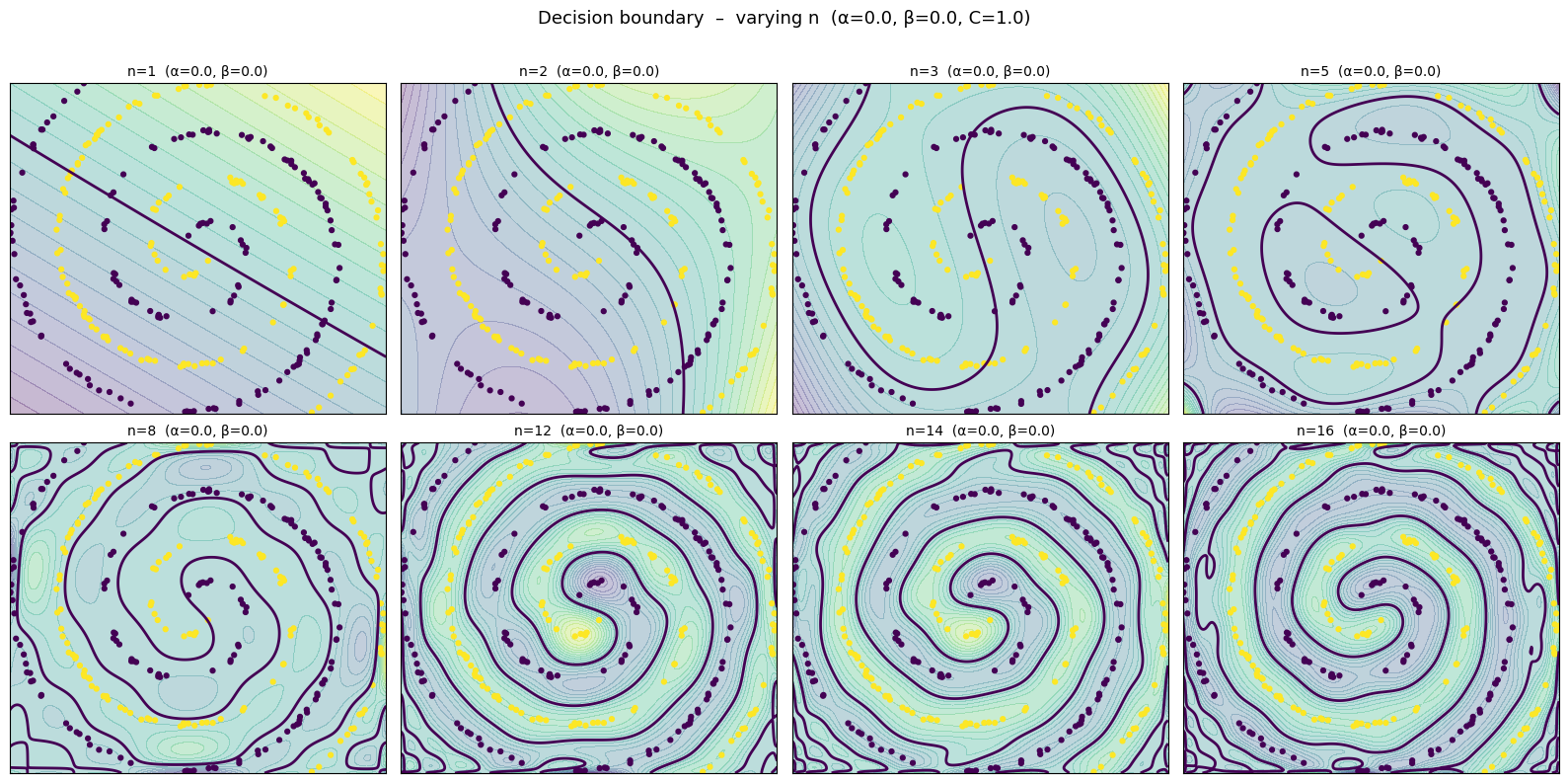}
    \caption{%
        Decision boundaries $\{g(\mathbf{x})=0\}$ of the Jacobi-kernel
        SVM on the double-spiral dataset for $\alpha=\beta=0$ (Legendre
        polynomials) and $C=1$, as the truncation level $n$ varies from
        $1$ to $16$.
        \textit{Top row, left to right}: $n=1,2,3,5$.
        \textit{Bottom row, left to right}: $n=8,12,14,16$.
        The background colour map shows the decision function
        $g(\mathbf{x})$; the bold curve is the zero level set; points
        are coloured by class label.
    }
    \label{fig:boundary_varying_n}
\end{figure}

The progression in Figure~\ref{fig:boundary_varying_n} illustrates
how the truncation level $n$ controls the geometric complexity of the
learned separator, in a way that is directly traceable to the dimension
of the RKHS $\mathcal{H}_n^{(2)}$.

For $n = 1$ the feature space has dimension $(1+1)^2 = 4$ and spans
only the constant and linear monomials in each coordinate; the decision
boundary is accordingly a straight line, and the SVM can only produce
a coarse linear separation of the two spirals. At $n = 2$ the
dimension grows to $9$ and quadratic interactions become available: the
boundary curves but remains topologically simple, tracing a single
smooth arc that begins to track one branch of the spiral while
misclassifying most of the second. At $n = 3$ (dimension $16$) a
closed loop first appears, showing that the kernel now possesses
sufficient representational capacity to produce a non-simply-connected
decision region, though the approximation remains far from the true
spiral structure.

A qualitative phase transition occurs near $n = 5$: the boundary
develops a recognisable S-shaped geometry with two interleaved branches
that match the gross topology of the spiral arms. By $n = 8$
(dimension $81$) the separator traces a clean double-spiral contour
that aligns closely with both classes, and the background function
$g(\mathbf{x})$ shows a smooth alternating sign pattern consistent
with the rotational structure of the data. For $n \in \{12, 14, 16\}$
the boundary remains spiral-shaped but becomes progressively more
oscillatory: additional higher-degree modes enrich the contour with
fine local detail, and the decision function develops local extrema in
the interior of the domain, visible as colour saturation near the
origin. This behaviour is consistent with the ORCA interpretation: as
$n$ increases, energy shifts toward tensor modes of higher total degree
$N(\mathbf{k}) = k_1 + k_2$, indicating that the fitted classifier
allocates an increasing share of its RKHS norm to high-frequency
oscillatory components. The diagnostic value of the OKC total-degree
profile $\{\mathrm{OKC}_N\}$ is precisely to make this shift
quantitative, as we analyse in detail below.

\paragraph{Effect of the Jacobi parameters $(\alpha,\beta)$.}
Figure~\ref{fig:boundary_varying_ab} shows the decision boundary for
fixed $n = 12$ and $C = 1$ across twelve pairs $(\alpha,\beta)$,
organised to contrast symmetric cases ($\alpha = \beta$) with
asymmetric ones ($\alpha \neq \beta$).

\begin{figure}[htbp]
    \centering
    \includegraphics[width=\linewidth]{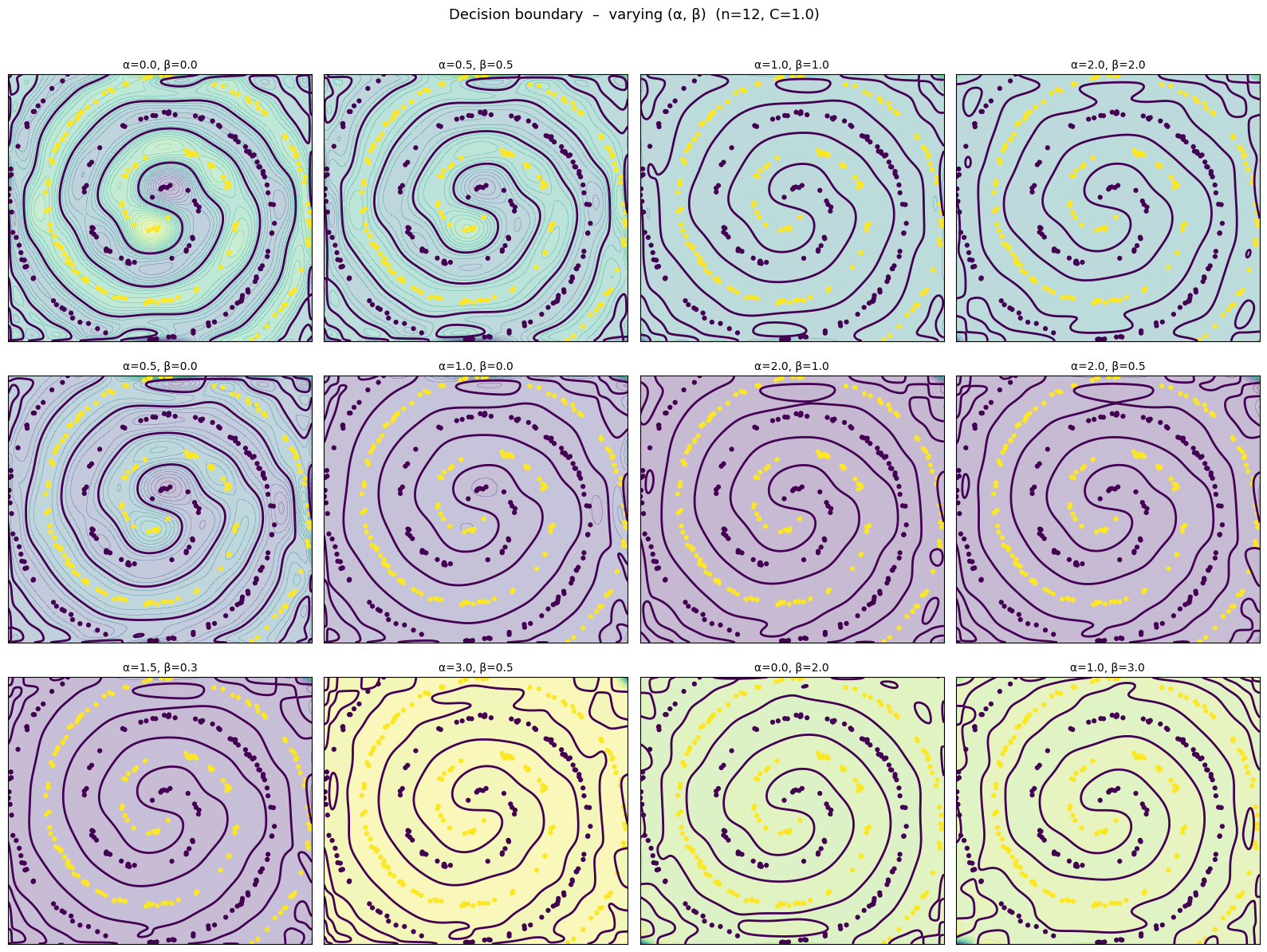}
    \caption{%
        Decision boundaries $\{g(\mathbf{x})=0\}$ of the Jacobi-kernel
        SVM on the double-spiral dataset for fixed $n=12$ and $C=1$,
        as the Jacobi parameters $(\alpha,\beta)$ vary.
        \textit{Top row}: symmetric cases $(\alpha,\beta) \in
        \{(0,0),\,(0.5,0.5),\,(1,1),\,(2,2)\}$.
        \textit{Middle row}: asymmetric cases $(\alpha,\beta) \in
        \{(0.5,0),\,(1,0),\,(2,1),\,(2,0.5)\}$.
        \textit{Bottom row}: asymmetric cases $(\alpha,\beta) \in
        \{(1.5,0.3),\,(3,0.5),\,(0,2),\,(1,3)\}$.
        The background colour map shows $g(\mathbf{x})$ and the bold
        curve is the zero level set.
    }
    \label{fig:boundary_varying_ab}
\end{figure}

The role of the Jacobi parameters is qualitatively distinct from that
of $n$: whereas varying $n$ changes the dimension of
$\mathcal{H}_n^{(2)}$, varying $(\alpha,\beta)$ with $n$ fixed
changes the inner product and hence the RKHS geometry, while keeping
the feature dimension $(n+1)^2 = 169$ constant.

\textit{Symmetric case} ($\alpha = \beta$). All four boundaries in the
top row of Figure~\ref{fig:boundary_varying_ab} are topologically
equivalent and share the same S-shaped spiral topology, confirming
that the gross structure of the separator is controlled by $n$ rather
than by the specific parameter values. However, as $\alpha = \beta$
increases from $0$ to $2$, the boundary becomes progressively
smoother: the local oscillations present in the Legendre case
($\alpha=\beta=0$) are attenuated, and the zero level set traces a
cleaner, more regular spiral arc. This regularisation effect has a
natural explanation in terms of the Jacobi weight $w^{(\alpha,\beta)}$:
for large $\alpha = \beta$, the measure concentrates mass toward the
interior of $[-1,1]$ and suppresses the contribution of high-frequency
polynomial components near the endpoints. In the RKHS induced by
$K_n^{(\alpha,\beta)}$, this translates into a de-emphasis of
high-degree modes $k_j$ large, which is precisely the effect one
would expect to observe in the total-degree OKC profile
$\{\mathrm{OKC}_N\}$: an upward shift of the spectral mass toward
lower values of $N(\mathbf{k})$ as $\alpha = \beta$ grows.

\textit{Asymmetric case} ($\alpha \neq \beta$). The middle and bottom
rows of Figure~\ref{fig:boundary_varying_ab} reveal a qualitatively
different effect. When $\alpha \neq \beta$, the Jacobi weight
$w^{(\alpha,\beta)}(x) = (1-x)^\alpha(1+x)^\beta$ assigns different
densities to the two endpoints of $[-1,1]$, breaking the symmetry of
the inner product. This asymmetry propagates to both input dimensions
through the tensor product, producing decision functions whose
background colour maps exhibit a directional bias: cases with $\alpha >
\beta$ (e.g., $(2,0)$, $(3,0.5)$) display a stronger gradient from
left to right across the domain, while the swapped pairs $(0,2)$ and
$(1,3)$ produce a mirrored pattern. Despite this geometric distortion,
the zero level set continues to trace a recognisable spiral, though it
loses the left-right symmetry present in the symmetric cases. This
behaviour illustrates a key point for ORCA diagnostics: the same
truncation level $n$ can give rise to structurally different
classifiers depending on the Jacobi parameters, and the OKC profiles
capture this structural difference quantitatively through the
interaction-order indices $\mathrm{OKC}^{(q)}$ and the
coordinate-specific marginal indices $\mathrm{OKC}_i$.

Taken together, Figures~\ref{fig:boundary_varying_n} and
\ref{fig:boundary_varying_ab} establish two complementary sources of
model complexity in the Jacobi-kernel SVM: the truncation level $n$,
which governs the maximum polynomial degree and thus the
representational capacity of the RKHS, and the weight parameters
$(\alpha,\beta)$, which shape the RKHS geometry and the relative
importance of different polynomial modes at fixed capacity. The
remainder of this section quantifies both effects through the ORCA
framework.

\paragraph{ORCA profiles: quantities and experimental setup.}
 
We now apply the ORCA framework to quantify the internal structure of
the trained classifiers. Recall from Section~5 that, for $d = 2$, the
squared RKHS norm of the trained component $h_n^{(2)}$ decomposes
exactly as
\[
  \|h_n^{(2)}\|^2_{\mathcal{H}_n^{(2)}}
  = \sum_{\mathbf{k} \in \{0,\ldots,n\}^2} c_{\mathbf{k}}^2
  = \mathrm{OKC}^{(0)} + \mathrm{OKC}^{(1)} + \mathrm{OKC}^{(2)},
\]
where the three blocks correspond to interaction orders $q = 0$
(constant mode $\mathbf{k} = \mathbf{0}$), $q = 1$ (purely marginal
modes, exactly one of $k_1, k_2$ positive), and $q = 2$ (interaction
modes, both $k_1 > 0$ and $k_2 > 0$). The marginal block further
splits into coordinate-specific contributions
\[
  \mathrm{OKC}_1 := \sum_{k_1 > 0} c_{(k_1,0)}^2
  \Big/ \|h_n^{(2)}\|^2,
  \qquad
  \mathrm{OKC}_2 := \sum_{k_2 > 0} c_{(0,k_2)}^2
  \Big/ \|h_n^{(2)}\|^2,
  \qquad
  \mathrm{OKC}^{(1)} = \mathrm{OKC}_1 + \mathrm{OKC}_2.
\]
Aggregating by total degree $N(\mathbf{k}) = k_1 + k_2$ gives the
total-degree profile
\[
  \mathrm{OKC}_N := \sum_{\substack{\mathbf{k}\in\{0,\ldots,n\}^2 \\
  k_1+k_2 = N}} c_{\mathbf{k}}^2 \Big/ \|h_n^{(2)}\|^2,
  \qquad N = 0, 1, \ldots, 2n,
\]
from which two summary statistics are derived. The \emph{spectral
peak}
\[
  N^* := \operatorname{arg\,max}_{N}\,\mathrm{OKC}_N
\]
identifies the total degree carrying the most energy. The
\emph{spectral threshold} $T_\varepsilon$ is the smallest $T$ such that
the cumulative mass $F(T) := \sum_{N=0}^{T} \mathrm{OKC}_N$ reaches
$1 - \varepsilon$; together, the pair $(T_\varepsilon,\, F(T_\varepsilon))$
records both the degree at which coverage is achieved and how tightly
the budget is exhausted there.
Finally, the \emph{even} and \emph{odd} masses collect the total
$\mathrm{OKC}_N$ mass over even and odd values of $N$ respectively:
\[
  \mathrm{even} := \sum_{N \text{ even}} \mathrm{OKC}_N,
  \qquad
  \mathrm{odd}  := \sum_{N \text{ odd}}  \mathrm{OKC}_N,
  \qquad
  \mathrm{even} + \mathrm{odd} = 1.
\]
 
Tables~\ref{tab:orca_legendre} and~\ref{tab:orca_jacobi} report all
these quantities for two representative configurations: the Legendre
case $(\alpha,\beta) = (0,0)$ and the asymmetric Jacobi case
$(\alpha,\beta) = (2.5,\,1.2)$, both with $C = 1$ and $n \in \{1, 2,
3, 5, 8, 12, 14, 16\}$.
 
\begin{table}[htbp]
\centering
\caption{ORCA decomposition for the double-spiral dataset.
  Jacobi parameters $(\alpha,\beta) = (0,0)$ (Legendre), $C = 1$.
  $\mathrm{OKC}^{(0)}$: constant mass;
  $\mathrm{OKC}^{(1)}$: total marginal mass;
  $\mathrm{OKC}_1$, $\mathrm{OKC}_2$: coordinate-specific marginals;
  $\mathrm{OKC}^{(2)}$: interaction mass.
  $N^*$: spectral peak degree.
  $T_\varepsilon$: smallest total degree $T$ with cumulative
  $\mathrm{OKC}$ mass $F(T) \geq 1-\varepsilon$.}
\label{tab:orca_legendre}
\setlength{\tabcolsep}{4pt}
\small
\begin{tabular}{r rr r rrrr r rrr}
\toprule
$n$
  & even & odd
  & $\mathrm{OKC}^{(0)}$
  & $\mathrm{OKC}^{(1)}$
  & $\mathrm{OKC}_1$ & $\mathrm{OKC}_2$
  & $\mathrm{OKC}^{(2)}$
  & $N^*$
  & $T_{0.10}$ & $T_{0.05}$ & $T_{0.01}$ \\
\midrule
 1 & 0.0001 & 0.9999 & 0.0000 & 0.9999 & 0.3100 & 0.6900 & 0.0001 &  1 &  1 &  1 &  1 \\
 2 & 0.0188 & 0.9812 & 0.0000 & 0.8682 & 0.6856 & 0.1827 & 0.1318 &  1 &  3 &  3 &  3 \\
 3 & 0.0002 & 0.9998 & 0.0000 & 0.6018 & 0.5290 & 0.0728 & 0.3982 &  3 &  3 &  3 &  5 \\
 5 & 0.0047 & 0.9952 & 0.0000 & 0.2909 & 0.1810 & 0.1099 & 0.7091 &  7 &  7 &  9 &  9 \\
 8 & 0.0030 & 0.9970 & 0.0000 & 0.1770 & 0.1057 & 0.0713 & 0.8230 & 11 & 15 & 15 & 15 \\
12 & 0.0028 & 0.9972 & 0.0000 & 0.1432 & 0.0559 & 0.0873 & 0.8568 & 11 & 17 & 21 & -- \\
14 & 0.0032 & 0.9968 & 0.0000 & 0.1379 & 0.0890 & 0.0489 & 0.8621 & 11 & 17 & 19 & 23 \\
16 & 0.0027 & 0.9973 & 0.0000 & 0.1367 & 0.0822 & 0.0545 & 0.8633 & 11 & 17 & 19 & 25 \\
\bottomrule
\end{tabular}
\end{table}
 
\begin{table}[htbp]
\centering
\caption{ORCA decomposition for the double-spiral dataset.
  Jacobi parameters $(\alpha,\beta) = (2.5,\,1.2)$, $C = 1$.
  Column definitions as in Table~\ref{tab:orca_legendre}.}
\label{tab:orca_jacobi}
\setlength{\tabcolsep}{4pt}
\small
\begin{tabular}{r rr r rrrr r rrr}
\toprule
$n$
  & even & odd
  & $\mathrm{OKC}^{(0)}$
  & $\mathrm{OKC}^{(1)}$
  & $\mathrm{OKC}_1$ & $\mathrm{OKC}_2$
  & $\mathrm{OKC}^{(2)}$
  & $N^*$
  & $T_{0.10}$ & $T_{0.05}$ & $T_{0.01}$ \\
\midrule
 1 & 0.0001 & 0.9999 & 0.0000 & 0.9999 & 0.3121 & 0.6878 & 0.0001 &  1 &  1 &  1 &  1 \\
 2 & 0.1983 & 0.8017 & 0.0000 & 0.9136 & 0.5538 & 0.3599 & 0.0864 &  1 &  2 &  3 &  3 \\
 3 & 0.4829 & 0.5171 & 0.0000 & 0.7412 & 0.6100 & 0.1312 & 0.2588 &  2 &  3 &  3 &  4 \\
 5 & 0.4585 & 0.5415 & 0.0000 & 0.2563 & 0.1766 & 0.0797 & 0.7437 &  4 &  7 &  9 &  9 \\
 8 & 0.5552 & 0.4448 & 0.0000 & 0.2284 & 0.0952 & 0.1332 & 0.7716 & 10 & 13 & 15 & 15 \\
12 & 0.5727 & 0.4273 & 0.0000 & 0.1953 & 0.1009 & 0.0944 & 0.8047 & 10 & 15 & 15 & 19 \\
14 & 0.5188 & 0.4812 & 0.0000 & 0.2278 & 0.1107 & 0.1171 & 0.7722 & 10 & 15 & 17 & 21 \\
16 & 0.5178 & 0.4822 & 0.0000 & 0.2420 & 0.1148 & 0.1272 & 0.7580 & 10 & 15 & 18 & 25 \\
\bottomrule
\end{tabular}
\end{table}
 
\paragraph{Discussion.}
 
Several structural patterns emerge from both tables and merit detailed
comment.
 
\textit{Vanishing constant mode $\mathrm{OKC}^{(0)}$.}
The index $\mathrm{OKC}^{(0)}$ measures the fraction of squared RKHS
norm carried by the constant basis function $p_{\mathbf{0}}$,
i.e.\ the mode $(k_1, k_2) = (0,0)$. In every row of both tables,
$\mathrm{OKC}^{(0)} = 0$ exactly. This is not a numerical artefact:
it follows directly from the dual feasibility constraint
$\sum_{i=1}^m \alpha_i y_i = 0$ in~(8), which together with~(28)
gives $c_{\mathbf{0}} = p_{\mathbf{0}} \sum_{i=1}^m \alpha_i y_i = 0$.
Consequently, the entire squared RKHS norm is partitioned between
$\mathrm{OKC}^{(1)}$ and $\mathrm{OKC}^{(2)}$, with no constant
contribution regardless of $n$ or $(\alpha,\beta)$.
 
\textit{Interaction-order indices $\mathrm{OKC}^{(1)}$ and
$\mathrm{OKC}^{(2)}$.}
The index $\mathrm{OKC}^{(1)}$ aggregates all purely marginal modes
(those depending on exactly one coordinate), while $\mathrm{OKC}^{(2)}$
collects the interaction modes (both coordinates active). For both
parameter choices, $\mathrm{OKC}^{(2)}$ grows rapidly with $n$ and
dominates for $n \geq 5$, reaching values above $0.85$ for large $n$
in the Legendre case and above $0.75$ in the asymmetric case. This
confirms the geometric prediction: separating a double spiral
inherently requires coupling between the two input coordinates,
and the SVM correctly concentrates its RKHS norm on tensor modes
with $k_1 > 0$ and $k_2 > 0$. Note that a linear classifier,
which can only use modes with $q \leq 1$, would have
$\mathrm{OKC}^{(2)} = 0$ by construction; the interaction mass is
therefore a direct quantitative marker of the nonlinear structure
learned by the separator.
 
\textit{Coordinate-specific marginals $\mathrm{OKC}_1$ and
$\mathrm{OKC}_2$.}
These indices isolate the contribution of each input coordinate to
the purely marginal part $\mathrm{OKC}^{(1)}$. In both tables
$\mathrm{OKC}_1$ and $\mathrm{OKC}_2$ are of the same order of
magnitude for large $n$, consistently with the approximate symmetry of
the spiral between the two coordinates. In the Legendre case the
balance is not always tight at small $n$ (e.g., at $n = 2$,
$\mathrm{OKC}_1 = 0.686$ versus $\mathrm{OKC}_2 = 0.183$), but both
contributions decrease together as $\mathrm{OKC}^{(2)}$ grows with
$n$. In the asymmetric Jacobi case $(\alpha,\beta) = (2.5,\,1.2)$,
the heavier left-endpoint weight ($\alpha > \beta$) re-allocates mass
between the two coordinate-specific modes: the imbalance between
$\mathrm{OKC}_1$ and $\mathrm{OKC}_2$ is visibly more persistent than
in the Legendre case, reflecting the directional bias that the
asymmetric measure introduces into the RKHS geometry.
 
\textit{Even and odd masses, and rotational symmetry.}
The even and odd masses partition the total-degree profile into
contributions from even and odd values of $N(\mathbf{k}) = k_1 + k_2$.
In the Legendre case $(\alpha,\beta) = (0,0)$, the odd mass is
consistently close to $1$ across all values of $n$, with even
contributions below $0.02$ in every row. This is a direct spectral
signature of the approximate $180^\circ$ rotational symmetry of the
double spiral: such a rotation maps one arm onto the other while
reversing the class label, making the decision function $g$
approximately odd in the Legendre coordinates and concentrating the
RKHS norm on modes of odd total degree. Crucially, this symmetry is
not imposed externally: it is discovered from the data through the
orthogonal polynomial expansion. In the asymmetric case
$(\alpha,\beta) = (2.5,\,1.2)$ the picture changes qualitatively:
the even mass reaches values above $0.55$ for $n \geq 8$, and the
even-odd split is no longer extreme. This reflects the fact that
the Jacobi weight $w^{(2.5,\,1.2)}(x) = (1-x)^{2.5}(1+x)^{1.2}$
breaks the parity of the $L^2(\mu)$ inner product, so that the same
geometric symmetry of the data is encoded across a mixture of even
and odd orthogonal modes under the asymmetric measure.
 
\textit{Spectral peak $N^*$ and its stabilisation.}
The spectral peak $N^* = \operatorname{arg\,max}_N \mathrm{OKC}_N$
identifies the total degree at which the aggregated OKC mass is
greatest. For small $n$, $N^*$ grows with the truncation level,
reflecting spectral saturation: the RKHS does not yet have enough
modes to distribute the energy freely, so the mass is pushed toward
the highest available degree. Once $n$ is large enough, however, $N^*$
stabilises: in the Legendre case it is fixed at $N^* = 11$ for all
$n \in \{8, 12, 14, 16\}$, and in the asymmetric case at $N^* = 10$
for the same values. This stabilisation signals that the RKHS has
reached sufficient capacity to represent the decision function without
spectral constraint; additional modes add oscillatory corrections
around the peak without shifting it. The stabilised value of $N^*$
is therefore a data-intrinsic complexity indicator: it encodes the
characteristic polynomial degree at which the geometry of the
classification problem is most efficiently captured in the given
orthogonal coordinate system. The small difference between the two
stabilised values ($N^* = 11$ versus $N^* = 10$) reflects the change
in orthogonal coordinates induced by the Jacobi parameters.
 
\textit{Spectral thresholds $T_\varepsilon$ and cumulative mass $F(T)$.}
The threshold $T_\varepsilon$ is the smallest total degree $T$ such
that $F(T) = \sum_{N=0}^T \mathrm{OKC}_N \geq 1 - \varepsilon$; the
value $F(T_\varepsilon)$ records how tightly the cumulative budget
is exhausted at that degree. Both quantities are needed: two
configurations may share the same $T_\varepsilon$ while differing
in $F(T_\varepsilon)$, and this difference is informative about
spectral compactness. In both tables, $T_\varepsilon$ increases
monotonically with $n$ for all three levels $\varepsilon \in
\{0.10, 0.05, 0.01\}$, confirming the expected spectral broadening:
as the truncation level grows, the RKHS norm spreads over higher
total degrees. For instance, in the Legendre case $T_{0.05}$ jumps
from $1$ at $n = 1$ (with $F(1) = 1.000$, all mass at degree $1$)
to $19$ at $n = 16$ (with $F(19) = 0.950$, barely crossing the
threshold). The latter is a borderline case: the mass is spread widely
enough that the $95\%$ coverage is achieved only at degree $19$ and
essentially nothing is left in reserve. By contrast, at $n = 8$ the
same threshold is $T_{0.05} = 15$ with $F(15) = 0.9998$, indicating a
spectrally compact classifier that concentrates its energy well below
the maximum available degree $2n = 16$. For a fixed $n$, the
asymmetric parameters $(\alpha,\beta) = (2.5,\,1.2)$ tend to produce
slightly lower thresholds than the Legendre case, consistent with the
smoothing effect observed in the boundary plots: the asymmetric weight
concentrates more mass at lower degrees, yielding a spectrally more
compact representation of the same classifier at the same truncation
level.

\subsection{Real data: echocardiogram dataset  (\texorpdfstring{$d=5$}{d=5})}
\label{subsec:echo}
 
\paragraph{Dataset and features.}
The second experiment uses the Echocardiogram dataset from the UCI Machine Learning Repository \cite{uci-echocardiogram}, which contains post-heart-attack echocardiographic measurements together with survival-related variables. After removing records with missing values in the variables of interest, the working dataset contains $m = 61$ observations. The binary label is the variable \texttt{still-alive}: $y_i = +1$ if the patient was alive at the end of the follow-up period and $y_i = -1$ otherwise. The five input features used are listed in Table 3; the remaining columns of the original dataset (survival time, wall-motion score, group identifier, and the derived one-year survival indicator) are discarded as either redundant, unreliable, or non-predictive for the task at hand \cite{uci-echocardiogram}.

\begin{table}[htbp]
\centering
\caption{Input features used in the echocardiogram experiment ($d=5$).}
\label{tab:echo_features}
\small
\begin{tabular}{cl}
\toprule
Index & Description \\
\midrule
$x_1$ & Age at time of heart attack (years) \\
$x_2$ & Fractional shortening: measure of cardiac contractility
        (lower values indicate greater abnormality) \\
$x_3$ & E-point septal separation (epss): measure of contractility;
        larger values indicate greater abnormality \\
$x_4$ & Left ventricular end-diastolic dimension (lvdd):
        size of the heart at end-diastole \\
$x_5$ & Wall-motion index: wall-motion score divided by number of
        segments seen \\
\bottomrule
\end{tabular}
\end{table}
 
All five features are continuous and are rescaled to $[-1,1]$ prior
to training, following the same procedure as in
Section~\ref{subsec:spiral}. With $d = 5$, the tensor-product feature
space $\mathcal{H}_n^{(5)}$ has dimension $(n+1)^5$, and the
multi-index set $\mathcal{I}_n^{(5)} = \{0,\ldots,n\}^5$ supports
interaction orders $q \in \{0,1,2,3,4,5\}$ and total degrees
$N(\mathbf{k}) \in \{0,\ldots,5n\}$. The ORCA decomposition now
reads
\[
  \|h_n^{(5)}\|^2 = \mathrm{OKC}^{(0)} + \mathrm{OKC}^{(1)}
  + \mathrm{OKC}^{(2)} + \mathrm{OKC}^{(3)}
  + \mathrm{OKC}^{(4)} + \mathrm{OKC}^{(5)},
\]
with the five non-trivial interaction orders reflecting single-variable
effects ($q=1$), pairwise couplings ($q=2$), three-way ($q=3$),
four-way ($q=4$), and full five-way interactions ($q=5$).
 
\paragraph{ORCA tables.}
Tables~\ref{tab:echo_legendre}, \ref{tab:echo_jacobi_a}
and~\ref{tab:echo_jacobi_b} report the ORCA profiles for three Jacobi
configurations: $(\alpha,\beta)=(0,0)$ (Legendre),
$(\alpha,\beta)=(4.3,\,1.8)$, and $(\alpha,\beta)=(0.8,\,2.7)$,
all with $C=1$ and $n\in\{1,2,5,6,7,8,10,15,25\}$.
 
\begin{table}[htbp]
\centering
\caption{ORCA decomposition, echocardiogram dataset.
  $(\alpha,\beta)=(0,0)$ (Legendre), $d=5$, $C=1$.
  $N^*$: spectral peak degree.
  $T_\varepsilon$: spectral threshold ($F(T_\varepsilon)$ in parentheses).}
\label{tab:echo_legendre}
\setlength{\tabcolsep}{3pt}
\scriptsize
\begin{tabular}{r rr rrrrr r rrr}
\toprule
$n$ & even & odd
    & $\mathrm{OKC}^{(1)}$
    & $\mathrm{OKC}^{(2)}$ & $\mathrm{OKC}^{(3)}$
    & $\mathrm{OKC}^{(4)}$ & $\mathrm{OKC}^{(5)}$
    & $N^*$
    & $T_{0.10}$ & $T_{0.05}$ & $T_{0.01}$ \\
\midrule
 1 & 0.200 & 0.800 & 0.616 & 0.150 & 0.164 & 0.050 & 0.020 &  1 &  3 &  4 &  5 \\
 2 & 0.514 & 0.486 & 0.099 & 0.233 & 0.360 & 0.237 & 0.066 &  4 &  7 &  8 &  9 \\
 5 & 0.505 & 0.495 & 0.014 & 0.086 & 0.243 & 0.384 & 0.274 & 12 & 18 & 19 & 21 \\
 6 & 0.494 & 0.506 & 0.009 & 0.065 & 0.215 & 0.400 & 0.311 & 14 & 20 & 22 & 25 \\
 7 & 0.501 & 0.499 & 0.005 & 0.046 & 0.169 & 0.404 & 0.376 & 18 & 24 & 26 & 29 \\
 8 & 0.497 & 0.503 & 0.003 & 0.029 & 0.141 & 0.397 & 0.430 & 20 & 27 & 29 & 33 \\
10 & 0.499 & 0.501 & 0.001 & 0.015 & 0.105 & 0.374 & 0.505 & 25 & 34 & 36 & 41 \\
15 & 0.500 & 0.500 & 0.000 & 0.005 & 0.058 & 0.306 & 0.631 & 37 & 51 & 54 & 61 \\
25 & 0.500 & 0.500 & 0.000 & 0.001 & 0.024 & 0.214 & 0.760 & 63 & 84 & 90 & 101 \\
\bottomrule
\end{tabular}
\end{table}
 
\begin{table}[htbp]
\centering
\caption{ORCA decomposition, echocardiogram dataset.
  $(\alpha,\beta)=(4.3,\,1.8)$, $d=5$, $C=1$.
  Column definitions as in Table~\ref{tab:echo_legendre}.}
\label{tab:echo_jacobi_a}
\setlength{\tabcolsep}{3pt}
\scriptsize
\begin{tabular}{r rr rrrrr r rrr}
\toprule
$n$ & even & odd
    & $\mathrm{OKC}^{(1)}$
    & $\mathrm{OKC}^{(2)}$ & $\mathrm{OKC}^{(3)}$
    & $\mathrm{OKC}^{(4)}$ & $\mathrm{OKC}^{(5)}$
    & $N^*$
    & $T_{0.10}$ & $T_{0.05}$ & $T_{0.01}$ \\
\midrule
 1 & 0.380 & 0.620 & 0.250 & 0.180 & 0.369 & 0.200 & 0.001 &  3 &  4 &  4 &  4 \\
 2 & 0.499 & 0.501 & 0.170 & 0.208 & 0.267 & 0.243 & 0.112 &  6 &  7 &  8 &  9 \\
 5 & 0.492 & 0.508 & 0.051 & 0.152 & 0.285 & 0.358 & 0.154 &  9 & 15 & 19 & 19 \\
 6 & 0.493 & 0.507 & 0.027 & 0.102 & 0.266 & 0.392 & 0.212 & 11 & 19 & 21 & 24 \\
 7 & 0.504 & 0.496 & 0.019 & 0.086 & 0.255 & 0.393 & 0.247 & 14 & 23 & 23 & 27 \\
 8 & 0.499 & 0.501 & 0.011 & 0.066 & 0.225 & 0.413 & 0.285 & 16 & 25 & 27 & 31 \\
10 & 0.501 & 0.499 & 0.006 & 0.046 & 0.194 & 0.415 & 0.338 & 22 & 31 & 34 & 39 \\
15 & 0.501 & 0.499 & 0.002 & 0.020 & 0.128 & 0.391 & 0.459 & 32 & 48 & 51 & 59 \\
25 & 0.500 & 0.500 & 0.000 & 0.007 & 0.063 & 0.314 & 0.615 & 57 & 80 & 86 & 97 \\
\bottomrule
\end{tabular}
\end{table}
 
\begin{table}[htbp]
\centering
\caption{ORCA decomposition, echocardiogram dataset.
  $(\alpha,\beta)=(0.8,\,2.7)$, $d=5$, $C=1$.
  Column definitions as in Table~\ref{tab:echo_legendre}.}
\label{tab:echo_jacobi_b}
\setlength{\tabcolsep}{3pt}
\scriptsize
\begin{tabular}{r rr rrrrr r rrr}
\toprule
$n$ & even & odd
    & $\mathrm{OKC}^{(1)}$
    & $\mathrm{OKC}^{(2)}$ & $\mathrm{OKC}^{(3)}$
    & $\mathrm{OKC}^{(4)}$ & $\mathrm{OKC}^{(5)}$
    & $N^*$
    & $T_{0.10}$ & $T_{0.05}$ & $T_{0.01}$ \\
\midrule
 1 & 0.551 & 0.449 & 0.105 & 0.360 & 0.330 & 0.191 & 0.014 &  2 &  4 &  4 &  5 \\
 2 & 0.511 & 0.489 & 0.015 & 0.135 & 0.348 & 0.332 & 0.170 &  4 &  8 &  8 &  9 \\
 5 & 0.501 & 0.499 & 0.006 & 0.072 & 0.265 & 0.424 & 0.233 & 10 & 15 & 17 & 19 \\
 6 & 0.500 & 0.500 & 0.003 & 0.043 & 0.197 & 0.429 & 0.328 & 14 & 20 & 22 & 25 \\
 7 & 0.498 & 0.502 & 0.002 & 0.035 & 0.173 & 0.424 & 0.366 & 16 & 22 & 24 & 28 \\
 8 & 0.502 & 0.498 & 0.002 & 0.028 & 0.153 & 0.417 & 0.401 & 18 & 25 & 27 & 31 \\
10 & 0.500 & 0.500 & 0.001 & 0.017 & 0.119 & 0.389 & 0.474 & 22 & 32 & 35 & 39 \\
15 & 0.500 & 0.500 & 0.000 & 0.006 & 0.070 & 0.326 & 0.597 & 35 & 48 & 52 & 60 \\
25 & 0.500 & 0.500 & 0.000 & 0.002 & 0.030 & 0.237 & 0.731 & 58 & 82 & 89 & 99 \\
\bottomrule
\end{tabular}
\end{table}
 
\paragraph{Discussion: structural contrasts with the spiral dataset.}
 
The echocardiogram experiment reveals patterns that differ
qualitatively from those in Section~\ref{subsec:spiral}, and the ORCA
framework makes these differences explicit and quantitative.
 
\textit{Vanishing constant mode.}
As in the spiral case, $\mathrm{OKC}^{(0)} = 0$ exactly across all
configurations. The argument is the same: the dual feasibility
constraint forces $c_{\mathbf{0}} = 0$ regardless of $d$, $n$,
or $(\alpha,\beta)$.
 
\textit{Dominance of high-order interactions.}
The most striking qualitative difference from the $d=2$ case lies in
the distribution across interaction orders. With only two coordinates,
$q=2$ was the highest possible order and grew to dominate above $0.85$
for large $n$. Here, with $d=5$, the RKHS norm disperses progressively
across all five non-trivial interaction orders as $n$ increases.
For the Legendre case at $n=25$, the full five-way interaction block
$\mathrm{OKC}^{(5)} = 0.760$ is the single largest contributor,
followed by $\mathrm{OKC}^{(4)} = 0.214$, while $\mathrm{OKC}^{(1)}$
falls below $0.001$ already at $n=10$ and $\mathrm{OKC}^{(2)}$ becomes
negligible by $n=15$. This progression indicates that the cardiac
survival boundary is not well described by low-order interactions:
the SVM requires genuine high-dimensional coupling among all five
clinical variables simultaneously. From a medical standpoint this is
consistent with the known multifactorial nature of cardiac issues,
where no single predictor or low-order combination of predictors
captures the full risk profile.
 
\textit{Effect of Jacobi parameters on the interaction profile.}
The Legendre kernel $(\alpha,\beta)=(0,0)$ produces the strongest
concentration in $\mathrm{OKC}^{(5)}$ ($0.760$ at $n=25$).
The configuration $(\alpha,\beta)=(4.3,\,1.8)$ dampens this
tendency: $\mathrm{OKC}^{(5)}$ reaches only $0.615$ at $n=25$,
with more mass retained in $\mathrm{OKC}^{(4)}$ ($0.314$), producing
a more balanced interaction profile across the two highest orders.
The configuration $(\alpha,\beta)=(0.8,\,2.7)$ is intermediate, with
$\mathrm{OKC}^{(5)}=0.731$. These differences reflect the modulation
of the RKHS geometry by the Jacobi weight: larger values of
$(\alpha,\beta)$ attenuate high-degree polynomial modes near the
endpoints of $[-1,1]$, redistributing mass toward lower interaction
orders.
 
\textit{Absence of even-odd parity structure.}
In sharp contrast with the spiral, the even and odd masses are near
$0.5$ for all configurations and all $n \geq 2$, with no systematic
preference for either parity. This is the expected behaviour for real
clinical data: the echocardiogram features carry no geometric symmetry
analogous to the rotational structure of the spiral. The ORCA framework
detects this absence of structure as cleanly as it detected its
presence in the previous experiment, confirming that the even-odd
split is a genuine diagnostic tool rather than a numerical artefact.
 
\textit{Absence of spectral peak stabilisation.}
In the spiral experiment, $N^*$ stabilised once $n$ was large enough,
providing a data-intrinsic complexity estimate. Here, $N^*$ grows
continuously with $n$ across all three tables, reaching $N^*=63$ at
$n=25$ for Legendre. Two factors contribute to this behaviour. First,
the maximum available degree is $5n$, which expands five times faster
than in the $d=2$ case, so the spectral axis itself shifts outward as
$n$ grows. Second, the cardiac data geometry imposes no characteristic
polynomial scale: the classification boundary is spectrally diffuse
rather than concentrated at a fixed degree. Together, these
observations suggest that $N^*$ is informative as an intrinsic
complexity indicator primarily when the data has clear geometric
structure; for more irregular real datasets, the interaction-order
profile $\{\mathrm{OKC}^{(q)}\}$ and the spectral thresholds
$T_\varepsilon$ are the more reliable diagnostic quantities.
 
\textit{Spectral broadening with dimension.}
The thresholds $T_\varepsilon$ are substantially larger than in
the spiral case and grow rapidly with $n$, reflecting the much wider
spectral range in $d=5$: at $n=25$ the maximum total degree is
$5n = 125$, versus $2n = 50$ for $d=2$. For Legendre at $n=25$,
$T_{0.01}=101$, confirming that the classifier distributes energy
across a wide range of degrees. Despite this broadening, the
cumulative mass at threshold $F(T_\varepsilon)$ shows that coverage
is achieved well before the maximum available degree in all
configurations, indicating that the spectral energy remains
concentrated rather than uniform.

\section{Conclusions and future work}
\label{sec:conclusions}

This paper introduced a post-training framework for analyzing SVM classifiers built from truncated orthogonal polynomial kernels. The central idea is that, in this setting, the regularized part of the trained decision function admits an exact expansion in an explicit tensor-product orthonormal basis. This makes it possible to move from an implicit kernel representation to a structured description of the classifier in terms of orthogonal modes with clear geometric meaning.

On this basis, we defined Orthogonal Representation Contribution Analysis (ORCA) and the associated Orthogonal Kernel Contribution (OKC) indices. These quantities summarize how the squared RKHS norm of the trained classifier is distributed across interaction orders, total polynomial degrees, individual coordinates, and coordinate pairs. In this way, the framework provides a quantitative answer to questions that are difficult to address directly from the usual kernel expansion in support vectors. Rather than asking only whether a classifier performs well, ORCA also asks how the classifier is organized internally within the RKHS induced by the chosen kernel.

An important point is that the proposed analysis is fully post-training. Once the SVM has been fitted, the OKC indices are computed directly from the exact coefficients, without surrogate fitting, re-training, or additional optimization. This makes the method easy to integrate into the usual SVM workflow.

The present work also suggests several natural directions for future research.
One promising direction is to use ORCA as a diagnostic tool for structural overfitting. As the truncation level $n$ increases, or as the regularization parameter $C$ varies, one may track how the OKC profiles evolve together with validation performance. In particular, a systematic shift of the RKHS norm toward higher total degrees or higher-order interactions at the point where validation performance begins to deteriorate would provide a meaningful structural signal of overfitting. This would add an interpretable layer to the usual hyperparameter analysis and could help explain why a model starts to lose generalization ability.

A second direction is to study the stability of the proposed summaries. Even when predictive metrics remain nearly unchanged, the internal organization of the classifier may vary across different train--validation splits or bootstrap resamples. Analyzing how much the OKC profiles fluctuate under such perturbations would make it possible to assess whether the detected marginal, pairwise, or higher-order patterns are robust, or whether they are mainly sample-dependent. In this way, ORCA could also be used to evaluate the structural consistency of a fitted model, not only its predictive performance.

Another relevant extension is to use the OKC indices as a tie-breaking criterion in model selection. In practice, it is common to obtain several hyperparameter pairs $(n,C)$ with very similar validation scores. When this happens, predictive performance alone may not be sufficient to choose between them. The proposed indices offer a natural additional criterion: among models with comparable validation behavior, one may prefer the classifier whose RKHS norm is concentrated on lower interaction orders, lower total degrees, or structurally simpler marginal and pairwise components. From this perspective, ORCA could serve not only as a post-training diagnostic framework, but also as a principled aid for selecting between competing models with similar predictive accuracy.

Overall, the main message of the paper is that, when a kernel model comes with an explicit orthogonal structure, interpretability does not need to rely only on external approximations. It can be built directly from the trained classifier itself. We hope that this work motivates further use of orthogonal expansions not only as kernel constructions, but also as a principled language for describing how a learned model represents nonlinear structure.

\section*{Acknowledgements}

NT acknowledges support from the Severo Ochoa programme at ICMAT [details to be added].

VSL acknowledges the Ministerio de Ciencia, Innovaci\'{o}n y Universidades--Agencia Estatal de Investigaci\'{o}n, research project PID2024-155133NB-I00, \textit{Ortogonalidad, Aproximaci\'{o}n e Integrabilidad: Aplicaciones en Procesos Estoc\'{a}sticos Cl\'{a}sicos y Cu\'{a}nticos}, for its support.

EJH acknowledges Universidad de Alcal\'{a}, research project PIUAH25/CC-006, \textit{An\'{a}lisis espectral de procesos estoc\'{a}sticos cu\'{a}nticos abiertos: din\'{a}micas, propiedades y aplicaciones}, and the Ministerio de Ciencia, Innovaci\'{o}n y Universidades--Agencia Estatal de Investigaci\'{o}n, research project PID2024-155133NB-I00, \textit{Ortogonalidad, Aproximaci\'{o}n e Integrabilidad: Aplicaciones en Procesos Estoc\'{a}sticos Cl\'{a}sicos y Cu\'{a}nticos}, for their support.

VSL and EJH also wish to thank NT for inviting them to a six-month research stay at ICMAT (from Febreuary to July of 2026), during which this work was developed. They also gratefully acknowledge her as their scientific advisor for this research at ICMAT during that period.


\end{document}